\newcommand\figcaption{\def\@captype{figure}\caption}
\newcommand\tabcaption{\def\@captype{table}\caption}
\DeclareMathAlphabet{\mathsf}{OT1}{cmss}{m}{n}
\SetMathAlphabet{\mathsf}{bold}{OT1}{cmss}{bx}{n}
\newcommand{\Rmnum}[1]{\expandafter\@slowromancap\romannumeral #1@}
\begin{document}


\title{\huge \bf  Towards Understanding the Importance of Shortcut Connections in Residual Networks}

\date{}

\author{ Tianyi Liu, Minshuo Chen, Mo Zhou, Simon S. Du, Enlu Zhou and Tuo Zhao\thanks{T. Liu, M. Chen, E. Zhou, and T. Zhao are affiliated with School of Industrial and Systems Engineering at Georgia Tech; M. Zhou is now affiliated with CS Department of Duke University; S. S. Du is now affiliated with Institute of Advanced Study; This work is done while M. Zhou is at Peking University and S. S. Du is a Ph.D. student at CMU.  T. Liu and M. Chen contribute equally;  Tuo Zhao is the corresponding author; Email: tourzhao@gatech.edu.}}
\maketitle
\begin{abstract}
Residual Network (ResNet) is undoubtedly a milestone in deep learning. 
ResNet is equipped with shortcut connections between layers, and exhibits efficient training using simple first order algorithms. Despite of the great empirical success, the reason behind is far from being well understood. In this paper, we study a two-layer non-overlapping convolutional ResNet. Training such a network requires solving a non-convex optimization problem with a spurious local optimum. We show, however, that gradient descent combined with proper normalization, avoids being trapped by the spurious local optimum, and converges to a global optimum in polynomial time, when the weight of the first layer is initialized at $0$, and that of the second layer is initialized arbitrarily in a ball. Numerical experiments are provided to support our theory.
\end{abstract}


\section{Introduction}\label{sec:intro}

Neural Networks have revolutionized a variety of real world applications in the past few years, such as computer vision \citep{krizhevsky2012imagenet, goodfellow2014generative, Long_2015_CVPR}, natural language processing \citep{graves2013speech, bahdanau2014neural, young2018recent}, etc. Among different types of networks, Residual Network (ResNet, \citet{he2016deep}) is undoubted a milestone. ResNet is equipped with shortcut connections, which skip layers in the forward step of an input. Similar idea also appears in the Highway Networks \citep{srivastava2015training}, and further inspires densely connected convolutional networks \citep{huang2017densely}.

ResNet owes its great success to a surprisingly efficient training compared to the widely used feedforward Convolutional Neural Networks (CNN, \citet{krizhevsky2012imagenet}). Feedforward CNNs are seldomly used with more than 30 layers in the existing literature. There are experimental results suggest that very deep feedforward CNNs are significantly slow to train, and yield worse performance than their shallow counterparts \citep{he2016deep}. However, simple first order algorithms such as stochastic gradient descent and its variants are able to train ResNet with hundreds of layers, and achieve better performance than the state-of-the-art. For example, ResNet-152 \citep{he2016deep}, consisting of 152 layers, achieves a $19.38\%$ top-1 error on ImageNet. \citet{he2016} also demonstrated a more aggressive ResNet-1001 on the CIFAR-10 data set with 1000 layers. It achieves a $4.92\%$ error --- better than shallower ResNets such as ResNet-$110$.

Despite the great success and popularity of ResNet, the reason why it can be efficiently trained is still largely unknown. One line of research empirically studies ResNet and provides intriguing observations. \citet{veit2016residual}, for example, suggest that ResNet can be viewed as a collection of weakly dependent smaller networks of varying sizes. More interestingly, they reveal that these smaller networks alleviate the vanishing gradient problem. \citet{balduzzi2017shattered} further elaborate on the vanishing gradient problem. They show that the gradient in ResNet only decays sublinearly in contrast to the exponential decay in feedforward neural networks. Recently, \citet{li2018visualizing} visualize the landscape of neural networks, and show that the shortcut connection yields a smoother optimization landscape. In spite of these empirical evidences, rigorous theoretical justifications are seriously lacking. 

Another line of research theoretically investigates ResNet with simple network architectures. \citet{hardt2016identity} show that linear ResNet has no spurious local optima (local optima that yield larger objective values than the global optima). Later, \citet{li2017convergence} study using Stochastic Gradient Descent (SGD) to train a two-layer ResNet with only one unknown layer. They show that the optimization landscape has no spurious local optima and saddle points. They also characterize the local convergence of SGD around the global optimum. These results, however, are often considered to be overoptimistic, due to the oversimplified assumptions.

To better understand ResNet, we study a two-layer non-overlapping convolutional neural network, whose optimization landscape contains a spurious local optimum. Such a network was first studied in \citet{du2017gradient}. Specifically, we consider 
\begin{align}\label{eq:cnn}
g(v, a, Z) = a^\top \sigma\left(Z^\top v\right),
\end{align}
where $Z \in \RR^{p \times k}$ is an input, $a \in \RR^k, v \in \RR^p$ are the output weight and the convolutional weight, respectively, and $\sigma$ is the element-wise ReLU activation. Since the ReLU activation is positive homogeneous, the weights $a$ and $v$ can arbitrarily scale with each other. Thus, we  impose the assumption $\norm{v}_2=1$  to make the neural network identifiable. We further decompose $v = \frac{\mathds{1}}{\sqrt{p}} + w$ with $\mathds{1}$ being a vector of $1$'s in $\RR^p$, and rewrite \eqref{eq:cnn} as
\begin{align}\label{eq:resnet}
& f(w,a,Z) =a^\top\sigma\left({Z^\top\left(\frac{\mathds{1}}{\sqrt{p}}+w\right)}\right),
\end{align}
Here $\frac{\mathds{1}}{\sqrt{p}}$ represents the average pooling shortcut connection, which allows a direct interaction between the input $Z$ and the output weight $a$.

We investigate the convergence of training ResNet by considering a realizable case. Specifically, the training data is generated from a teacher network with true parameters $a^*$, $v^*$ with $\norm{v^*}_2 = 1$. We aim to recover the teacher neural network using a student network defined in \eqref{eq:resnet} by solving an optimization problem:
\begin{align}\label{eq:obj}
(\hat{w}, \hat{a}) = \argmin_{w, a} \frac{1}{2} \EE_Z \left[f(w, a, Z) - g(v^*, a^*, Z)\right]^2,
\end{align}
where $Z$ is independent Gaussian input. Although largely simplified, \eqref{eq:obj} is nonconvex and possesses a nuisance --- There exists a spurious local optimum (see an explicit characterization in Section \ref{sec:model}). Early work, \citet{du2017gradient}, show that when the student network has the same architecture as the teacher network, GD with random initialization can be trapped in a spurious local optimum with a constant probability\footnote{The probability is bounded between $1/4$ and $3/4$. Numerical experiments show that this probability can be as bad as $1/2$ with the worst configuration of $a, v$.}. A natural question here is
\begin{center}

\emph{Does the shortcut connection ease the training?}

\end{center}
This paper suggests a positive answer: When initialized with $w = 0$ and $a$ arbitrarily in a ball, GD with proper normalization converges to a global optimum of \eqref{eq:obj} in polynomial time, under the assumption that $(v^*)^\top \left(\frac{\mathds{1}}{\sqrt{p}}\right)$ is close to $1$. Such an assumption requires that there exists a $w^*$ of relatively small magnitude, such that $v^* = \frac{\mathds{1}}{\sqrt{p}} + w^*$. This assumption is supported by both empirical and theoretical evidences. Specifically, the experiments in \citet{li2016demystifying} and \citet{yu2018learning}, show that the weight in well-trained deep ResNet has a small magnitude, and the weight for each layer has vanishing norm as the depth tends to infinity. \citet{hardt2016identity} suggest that, when using linear ResNet to approximate linear transformations, the norm of the weight in each layer scales as $O(1/D)$ with $D$ being the depth. \citet{bartlett2018representing} further show that deep nonlinear ResNet, with the norm of the weight of order $O(\log D / D)$, is sufficient to express differentiable functions under certain regularity conditions. These results motivate us to assume $w^*$ is relatively small.

Our analysis shows that the convergence of GD exhibits 2 stages. Specifically, our initialization guarantees $w$ is sufficiently away from the spurious local optimum. In the first stage, with proper step sizes, we show that the shortcut connection helps the algorithm avoid being attracted by the spurious local optima. Meanwhile, the shortcut connection guides the algorithm to evolve towards a global optimum. In the second stage, the algorithm enters the basin of attraction of the global optimum. With properly chosen step sizes, $w$ and $a$ jointly converge to the global optimum.

Our analysis thus explains why ResNet benefits training, when the weights are simply initialized at zero \citep{li2016demystifying}, or using the Fixup initialization in \citet{zhang2019fixup}. We remark that our choice of step sizes is also related to learning rate warmup \citep{goyal2017accurate}, and other learning rate schemes for more efficient training of neural networks \citep{smith2017cyclical, smith2018super}. We refer readers to Section \ref{sec:discuss} for a more detailed discussion.

\textbf{Notations:} Given a vector $v = (v_1, \dots, v_m)^\top \in \mathbb{R}^m$, we denote the Euclidean norm $\lVert v \rVert_2^2 = v^\top v$. Given two vectors $u, v \in \RR^d$, we denote the angle between them as $\angle (u, v) = \arccos \frac{u^\top v}{\norm{u}_2 \norm{v}_2}$, and the inner product as $\langle u, v \rangle = u^\top v$. We denote $\mathds{1} \in \RR^d$ as the vector of all the entries being $1$. We also denote $\BB_0(r) \in \RR^d$ as the Euclidean ball centered at $0$ with radius $r$.

\section{Model and Algorithm}\label{sec:model}
\begin{minipage}{0.66\textwidth}
\paragraph{Model.} We consider the realizable setting where the label is generated from a noiseless teacher network in the following form
\begin{align}\label{teacher}
	\textstyle g(v^*,a^*,Z) = \sum_{j=1}^ka^*_j\sigma\left(Z_j^\top v^*\right).
\end{align}
Here $v^*,a^*,Z_j$'s are the true convolutional weight, true output weight, and input. $\sigma$ denotes the element-wise ReLU activation.

Our student network is defined in \eqref{eq:resnet}. For notational convenience, we expand the second layer and rewrite \eqref{eq:resnet} as
\begin{align} \label{fprime}
	&\textstyle f(w,a,Z) = \sum_{j=1}^ka_j\sigma\left({Z_j^\top(\mathds{1}/\sqrt{p}+w)}\right),
\end{align}
where  $w\in \mathbb{R}^p$, $a_j \in \mathbb{R}$, and $Z_j \in \mathbb{R}^{p}$ for all $j=1,2, \dots ,k$. We assume the input data $Z_j$'s are  identically independently sampled from $\cN(0, I).$ Note that the above network is not identifiable, because of  the positive homogeneity of the ReLU function, that is
\end{minipage}
\hspace{0.02\textwidth}
\begin{minipage}{0.31\textwidth}
\centering
\includegraphics[width = 0.85\textwidth]{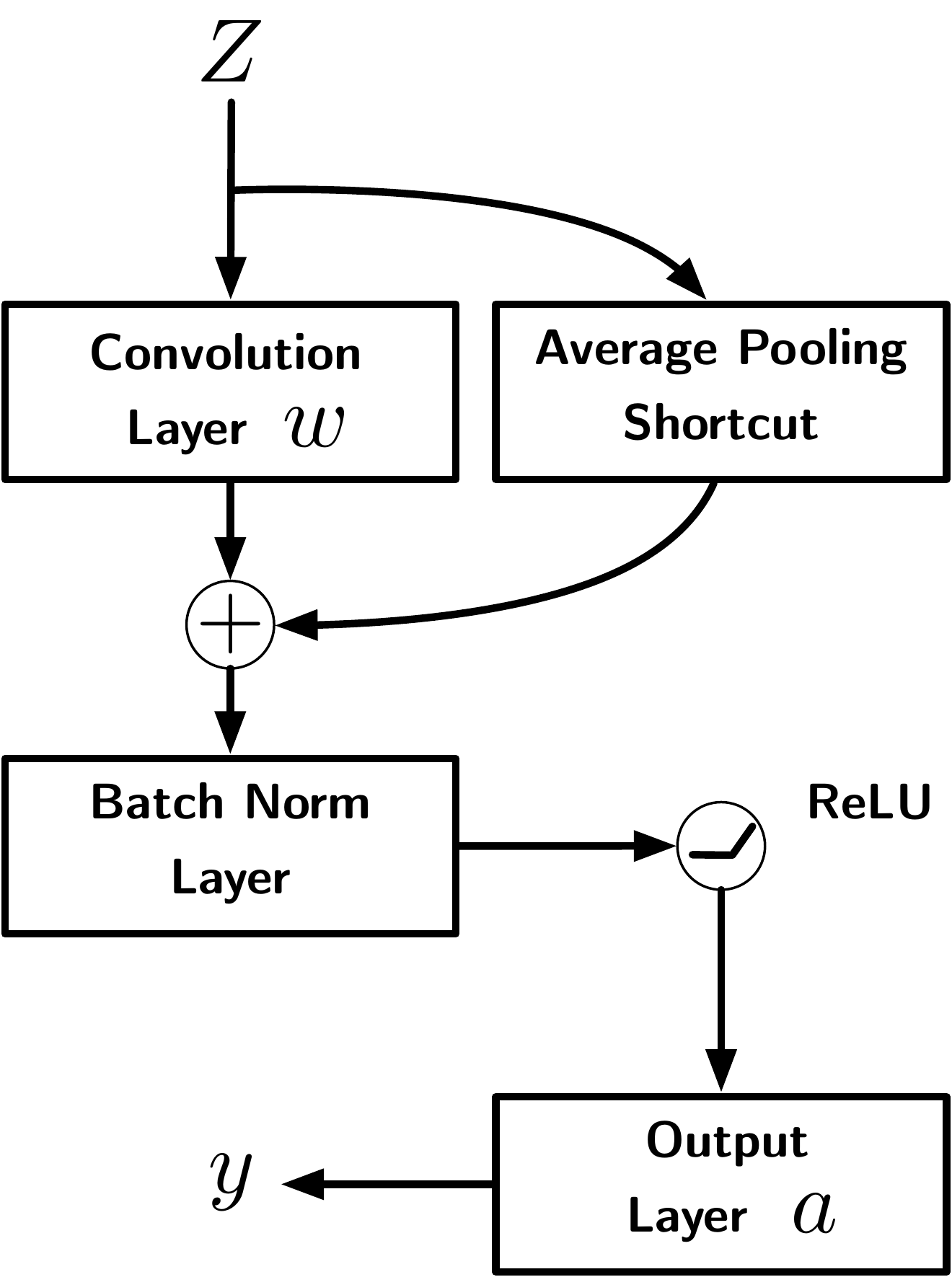}
\captionof{figure}{The non-overlapping two layer residual network with normalization layer.
}
\label{fig:ResNet2}
\end{minipage}
$\mathds{1}\sqrt{p}+w$ and $a$ can scale with each other by any positive constant without changing the output value. Thus, to achieve identifiability, instead of  \eqref{fprime}, we propose to train the following student network,
\begin{align}\label{student-network} 
&\textstyle f(w,a,Z) = \sum_{j=1}^ka_j\sigma\left(Z_j^\top \frac{\mathds{1}/\sqrt{p}+w}{\norm{\mathds{1}/\sqrt{p}+w}_2}\right).
\end{align}
An illustration of \eqref{student-network} is provided in Figure \ref{fig:ResNet2}.  An example  of the teacher network \eqref{teacher} and the student network  \eqref{student-network}  is shown in Figure \ref{fig:ResNet1}.
\begin{figure}
\centering
\includegraphics[width = 0.85\textwidth]{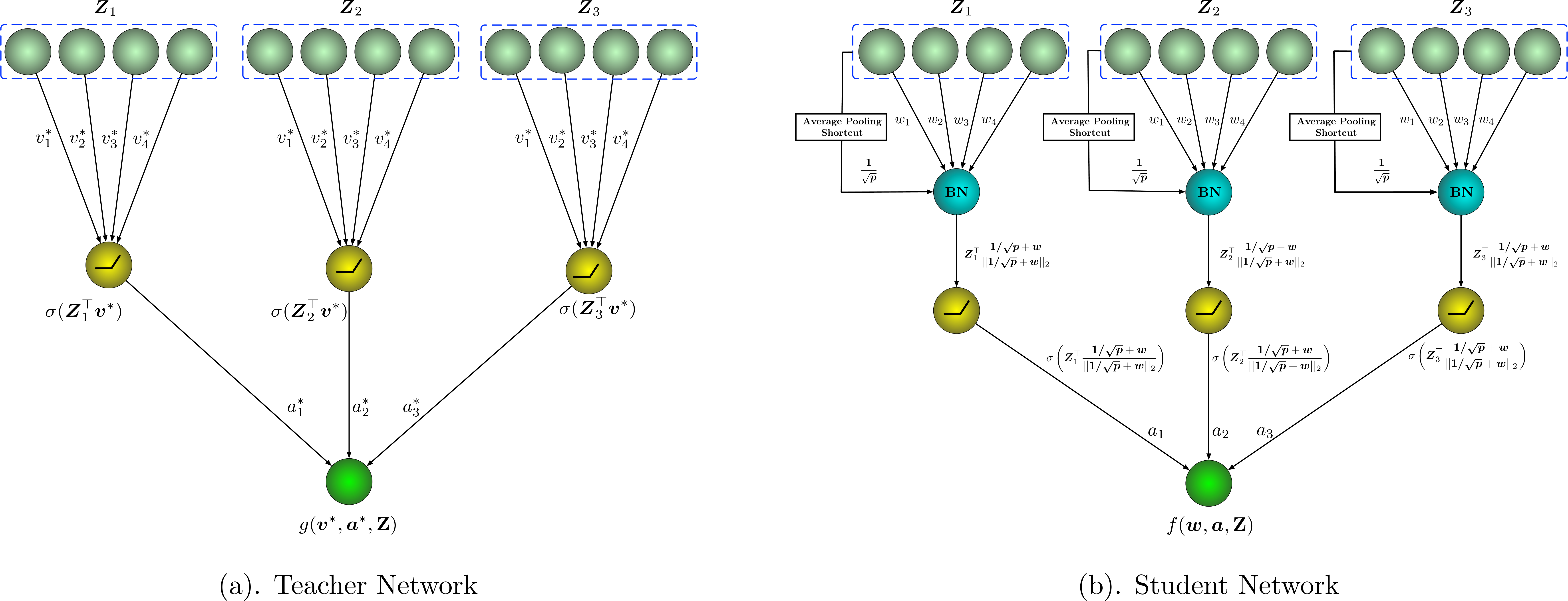}
\caption{Illustrative examples of the teacher and student networks with $k=3$ and $p=4$.  BN notes batch normalization. }
\label{fig:ResNet1}
\end{figure}
 We then recover $(v^*, a^*)$ of our teacher network by solving a nonconvex optimization problem
\begin{align}\label{optimization1}
&\min_{w,a}\cL(w,a) = \frac{1}{2}\EE_Z[g(v^*,a^*,Z)-f(w,a,Z)]^2.
\end{align}
Recall that we assume $\norm{v^*}_2 = 1$. One can easily verify that \eqref{optimization1} has global optima and spurious local optima. The characterization is analogous to \citet{du2017gradient}, although the objective is different.
\begin{proposition}\label{prop:spurious}
For any constant $\alpha>0,$ $(w, a)$ is a global optimum of \eqref{optimization1}, if $\frac{\mathds{1}}{\sqrt{p} } + w= \alpha v^*$  and $a = a^*;$ 
$(w, a)$ is a spurious local optimum of \eqref{optimization1}, if ${\frac{\mathds{1}}{\sqrt{p} } + w}= -\alpha v^*$ and $a = (\mathds{1}\mathds{1}^\top+(\pi-1)I)^{-1}(\mathds{1}\mathds{1}^\top-I)a^*.$
\end{proposition}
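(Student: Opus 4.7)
}
My approach is to reduce the objective $\cL(w,a)$ to an explicit function of $a$ and the single scalar $\theta = \angle(u(w), v^*)$, where $u(w) := (\mathds{1}/\sqrt{p}+w)/\lVert \mathds{1}/\sqrt{p}+w\rVert_2$ is the normalized direction actually plugged into the ReLU. The two standard Gaussian ReLU integrals
\[
\EE\bigl[\sigma(Z^\top u)\sigma(Z^\top v)\bigr] = \tfrac{1}{2\pi}\bigl[\sin\theta + (\pi-\theta)\cos\theta\bigr], \qquad \EE\bigl[\sigma(Z^\top u)\bigr] = \tfrac{1}{\sqrt{2\pi}},
\]
valid for unit vectors $u,v$ at angle $\theta$, together with independence of $Z_i, Z_j$ for $i\neq j$, let me expand the squared error and collect terms. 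Writing $c_1 = 1/\sqrt{2\pi}$, $c_2 = 1/2$, and $M(\theta) = \tfrac{1}{2\pi}[\sin\theta+(\pi-\theta)\cos\theta]$, I would arrive at
\[
2\cL(w,a) = a^\top A a - 2 a^\top B(\theta) a^* + (a^*)^\top A a^*,
\]
with $A = (c_2 - c_1^2) I + c_1^2\mathds{1}\mathds{1}^\top = \tfrac{1}{2\pi}[(\pi-1)I+\mathds{1}\mathds{1}^\top]$ and $B(\theta) = (M(\theta)-c_1^2) I + c_1^2\mathds{1}\mathds{1}^\top$.

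For the global optimum, I would plug in $\mathds{1}/\sqrt{p}+w = \alpha v^*$ with $\alpha>0$, which forces $u(w)=v^*$ and $\theta=0$; then $M(0)=c_2$, so $B(0)=A$ and the objective collapses to $\tfrac{1}{2}(a-a^*)^\top A (a-a^*)$. Since $A$ is positive definite (eigenvalues $(\pi-1)/(2\pi)$ with multiplicity $k-1$ and $(\pi-1+k)/(2\pi)$), the minimum value $0$ is attained exactly at $a=a^*$, matching the claim.

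For the spurious critical point, I would set $\theta=\pi$, so $M(\pi)=0$ and $B(\pi)=\tfrac{1}{2\pi}(\mathds{1}\mathds{1}^\top - I)$. Solving $\nabla_a\cL = A a - B(\pi) a^* = 0$ and canceling the common factor $1/(2\pi)$ yields precisely $a = [(\pi-1)I + \mathds{1}\mathds{1}^\top]^{-1}(\mathds{1}\mathds{1}^\top-I)\,a^*$, reproducing the proposition's formula. For the gradient in $w$, observe that $w$ enters $\cL$ only through $\theta$, and that $M'(\theta) = -\tfrac{1}{2\pi}(\pi-\theta)\sin\theta$ vanishes at $\theta=\pi$; since $\theta(w)$ is differentiable in $w$ at $u=-v^*$, the chain rule gives $\nabla_w\cL = 0$ at the claimed point, so it is a critical point of $\cL$.

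The main obstacle is verifying that this critical point is truly a local (not merely saddle) optimum rather than only stationary. The Hessian in $a$ is the positive definite matrix $A$, so $a$-perturbations are controlled immediately. The $w$-direction is more delicate: both $M'(\pi)=0$ and $M''(\pi)=0$, and a Taylor expansion around $\theta=\pi$ with $\epsilon = \pi-\theta$ gives $M(\theta)-M(\pi) = \epsilon^3/(6\pi) + O(\epsilon^5)$, so the leading change in $2\cL$ along $w$ is cubic, of the form $-\tfrac{\epsilon^3}{3\pi}\, a^\top a^*$. I would therefore close the argument by analyzing $a^\top a^*$ at the critical point, combining this cubic contribution with the quadratic-in-$\Delta a$ term and the mixed second-order cross terms induced by the normalization in $u(w)$, to conclude local minimality and to exhibit a direction of strict increase that rules out the global optimum. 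This higher-order sign analysis is the technically delicate part of the proof.
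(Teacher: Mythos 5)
The paper's own proof of Proposition \ref{prop:spurious} is a one-line reduction: cite \citet{du2017gradient}'s CNN result and substitute the normalized direction $(\mathds{1}/\sqrt{p}+w)/\norm{\mathds{1}/\sqrt{p}+w}_2$ for $v$. You instead reconstruct the argument from scratch via explicit Gaussian ReLU integrals. This is a genuinely different, self-contained route, and your algebra reducing $2\cL$ to $a^\top A a - 2 a^\top B(\theta)a^* + (a^*)^\top A a^*$ is correct (it agrees with the paper's Proposition \ref{prop:form}, with $g(\phi)=2\pi M(\phi)$). The global-optimum half is complete: $\theta=0$ gives $B(0)=A$, so $2\cL=(a-a^*)^\top A(a-a^*)$ with $A\succ 0$, whose unique minimizer over $a$ is $a=a^*$ with value $0=\min\cL$.

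There are two problems with the spurious-local-optimum half. A minor one first: your chain-rule justification that $\nabla_w\cL=0$ at $u(w)=-v^*$ does not hold as stated, because $\theta(w)=\arccos(u(w)^\top v^*)$ is not differentiable at $u^\top v^*=-1$. The clean argument is either the closed form $\nabla_w\cL=-\frac{a^\top a^*(\pi-\phi)}{2\pi}(I-vv^\top)v^*$, where both $(\pi-\phi)$ and $(I-vv^\top)v^*$ vanish at $v=-v^*$, or the observation that $M$ as a function of $c=\cos\theta=u^\top v^*$ has derivative $\frac{dM}{dc}=\frac{\pi-\arccos c}{2\pi}$, which is continuous and vanishes at $c=-1$.

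The more serious issue is the deferred ``higher-order sign analysis.'' Carrying it out does not give an unconditional conclusion. Along a pure $w$-perturbation with $\theta=\pi-\epsilon$ and $a=\bar a$ fixed, $2\cL$ changes by $-2M(\pi-\epsilon)\,\bar a^\top a^*\approx -\frac{\epsilon^3}{3\pi}\bar a^\top a^*$, and $M(\pi-\epsilon)>0$; local minimality therefore forces $\bar a^\top a^*\leq 0$. Sherman--Morrison gives $\bar a^\top a^*=\frac{1}{\pi-1}\bigl[\frac{\pi(\mathds{1}^\top a^*)^2}{\pi-1+k}-\norm{a^*}_2^2\bigr]$, which is strictly positive for, e.g., $a^*=\mathds{1}$ and $k\geq 2$ (in that case $\bar a$ is a positive multiple of $\mathds{1}$). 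There the critical point is a saddle, not a local minimum, and the quadratic $a$-term $\delta^\top A\delta\geq 0$ cannot rescue it since it vanishes at $\delta=0$. So completing your plan will either surface a condition on $a^*$ that neither the proposition nor the paper's citation of \citet{du2017gradient} makes explicit, or require a different argument; as written, the spurious-local-optimum claim is unproven, and the missing step is exactly where the claim can fail.
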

The proof is adapted from \citet{du2017gradient}, and the details are provided in Appendix \ref{pf:spurious}. 

Now we formalize the assumption on $v^*$ in Section \ref{sec:intro}, which is supported by the theoretical and empirical evidence in \citet{li2016demystifying, yu2018learning, hardt2016identity, bartlett2018representing}.
\begin{assumption}[Shortcut Prior]\label{assump1}
There exists a $w^*$ with $\norm{w^*}_2\leq 1,$ such that $v^* = w^*+\frac{\mathds{1}}{\sqrt{p} }.$
\end{assumption}
Assumption \ref{assump1} implies $(\frac{\mathds{1}}{\sqrt{p} })^\top v^* \geq 1/2$. We remark that our analysis actually applies to any $w^*$ satisfying $\norm{w^*}_2\leq c$ for any positive constant $c\in(0,\sqrt{2})$. Here we consider $\norm{w^*}_2 \leq 1$ to ease the presentation.
Throughout the rest of the paper, we assume this assumption holds true.

\paragraph{GD with Normalization.} 
We solve the optimization problem \eqref{optimization1} by gradient descent. Specifically, at the $(t+1)$-th iteration, we compute
\begin{align}
\tilde{w}_{t+1} &= w_t - \eta_w\nabla_w\cL(w_t,a_t), \notag\\
w_{t+1} &= \frac{\frac{\mathds{1}}{\sqrt{p}} +\tilde{w}_{t+1}}{\norm{\frac{\mathds{1}}{\sqrt{p}}+\tilde{w}_{t+1}}_2}-\frac{\mathds{1}}{\sqrt{p}},\label{normalization}\\
a_{t+1} &= a_t - \eta_a\nabla_a\cL(w_t,a_t).\notag
\end{align}
Note that we normalize $\frac{\mathds{1}}{\sqrt{p} } + w$ in \eqref{normalization}, which essentially guarantees $\Var\left(Z_j^\top\left(\frac{\mathds{1}}{\sqrt{p} }+w_{t+1}\right)\right)=1.$ 
As $Z_j$ is sampled from $N(0,I)$, we further have $\EE\left(Z_j^\top\left(\frac{\mathds{1}}{\sqrt{p} }+w_{t+1}\right)\right)=0$. The normalization step in \eqref{normalization} can be viewed as a population version of the widely used batch normalization trick to accelerate the training of neural networks \citep{ioffe2015batch}. Moreover,  \eqref{optimization1} has one unique optimal solution under such a normalization. Specifically, $(w^*, a^*)$ is the unique global optimum, and $(\bar{w}, \bar{a})$ is the only spurious local optimum along the solution path, where $\bar{w} = - (\frac{\mathds{1}}{\sqrt{p} }) - v^*$ and $\bar{a} = (\mathds{1}\mathds{1}^\top+(\pi-1)I)^{-1}(\mathds{1}\mathds{1}^\top-I)a^*$. 


\begin{figure}
\centering
\includegraphics[width = 0.8\textwidth]{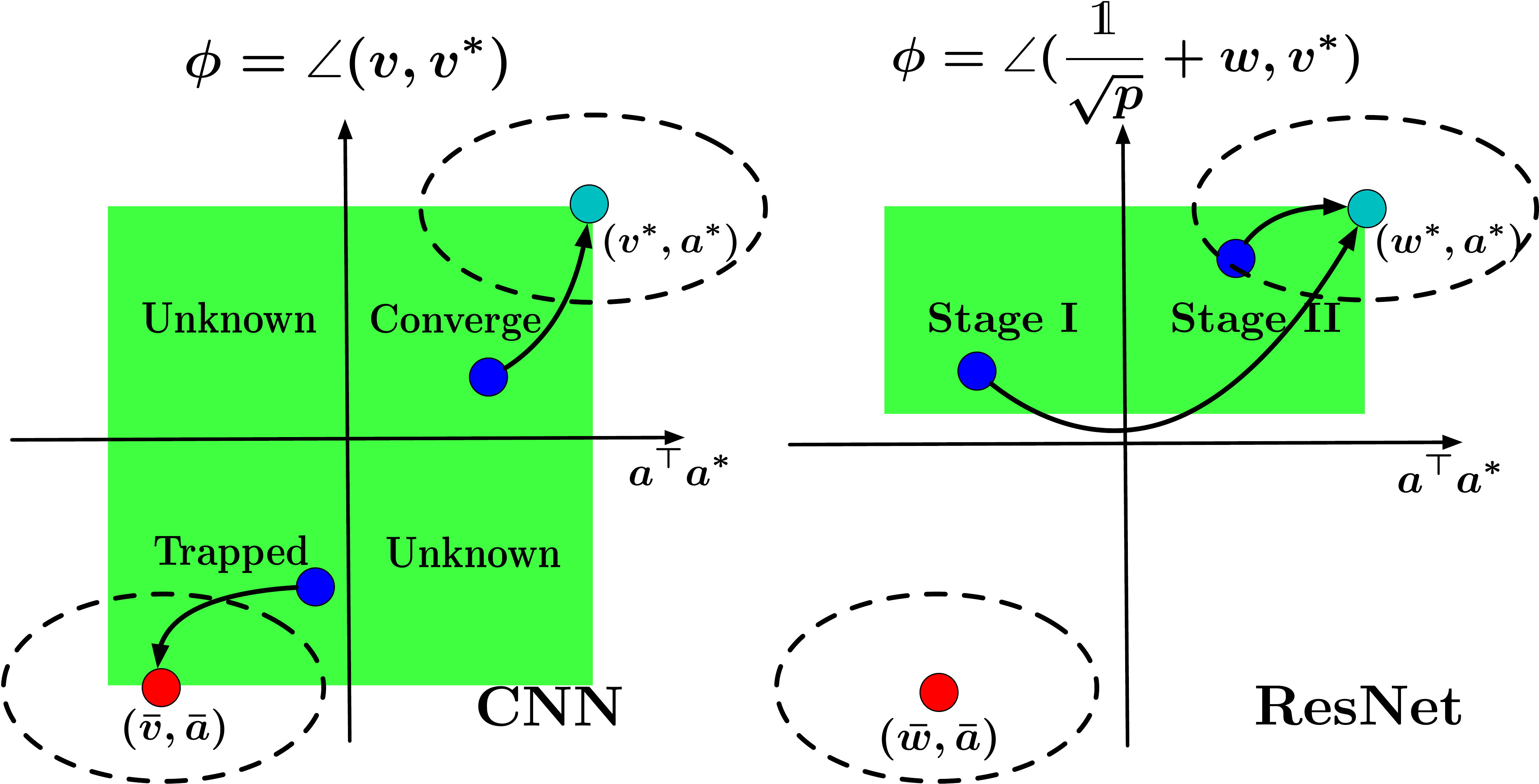}
\caption{The left panel shows random initialization on feedforward CNN can be trapped in the spurious local optimum with probability at least $1/4$ \citep{du2017gradient}. The right panel demonstrates: 1). Under the shortcut prior, our initialization of $(w, a)$ avoids starting near the spurious local optimum; 2). Convergence of GD exhibits two stages (I. improvement of $a$ and avoiding being attracted by $(\bar {w},\bar{a})$ II. joint convergence).}
\label{fig:converge}
\end{figure}
We initialize our algorithm at $(w_0,a_0)$ satisfying: $w_0=0$ and $a_0 \in \mathbb{B}_0\left(\frac{|\mathds{1}^\top a^*| }{ \sqrt{k}}\right).$ We set $a_0$ with a magnitude of $O\left(\frac{1}{\sqrt{k}}\right)$ to match common initialization techniques \citep{glorot2010understanding, lecun2012efficient, he2015delving}. We highlight that our algorithm starts with an arbitrary initialization on $a$, which is different from random initialization. The step sizes $\eta_a$ and $\eta_w$ will be specified later in our analysis.


\section{Convergence Analysis}\label{sec:sketch}

We characterize the algorithmic behavior of the gradient descent algorithm. 
Our analysis shows that under  Assumption \ref{assump1},  the convergence of  GD exhibits two stages. In the first stage, the algorithm avoids being trapped by the spurious local optimum. Given the algorithm is sufficiently away from the spurious local optima, the algorithm enters the basin of attraction of the global optimum and finally converge to it.


To present our main result, we begin with some notations. Denote $$\phi_t = \angle\left(\frac{\mathds{1}}{\sqrt{p} } +w_t,\frac{\mathds{1}}{\sqrt{p} }+ w^*\right)$$as the angle between $\frac{\mathds{1}}{\sqrt{p} }+w_t$ and the ground truth at the $t$-th iteration. Throughout the rest of the paper, we assume $\norm{{a}^*}_2$ is a constant. The notation $\tilde{O}(\cdot)$ hides $\mathrm{poly}(\norm{{a}^*}_2),$  $\mathrm{poly}(\frac{1}{\norm{{a}^*}_2})$, and $\mathrm{polylog}(\norm{{a}^*}_2)$ factors. Then we state the convergence of GD in the following theorem.
\begin{theorem}[Main Results]
Let the GD algorithm defined in Section \ref{sec:model} be initialized with $w_0=0$ {and arbitrary} $a_0 \in \mathbb{B}_0 (\frac{|\mathds{1}^\top a^*| }{ \sqrt{k}}).$ Then the algorithm converges in two stages:

\noindent\textbf{Stage I: Avoid the spurious local optimum} (Theorem \ref{stage1_a}): We choose $$\eta_a=O\left(\frac{1}{ k^2}\right)~\text{and }\eta_w=\tilde O\left(\frac{1}{k^4}\right).$$ Then there exists $T_1 = \tilde{O}(\frac{1}{\eta_a})$, such that $$m\leq a_{T_1}^\top{a}^* \leq M\text{ and }\phi_{T_1}\leq \frac{5}{12}\pi$$ hold for some constants $M > m > 0$.

\noindent\textbf{Stage II: Converge to the global optimum} (Theorem \ref{converge}): After $T_1$ iterations, we restart the counter, and choose $$\eta=\eta_a=\eta_w=\tilde O\left(\frac{1}{k^2}\right).$$ Then for any $\delta>0$, any $t\geq T_2=\tilde O(\frac{1}{\eta}\log\frac{1}{\delta})$, we have $$\norm{w_{t}-w^*}_2^2\leq \delta~\text{and}~\norm{a_{t}-a^*}_2^2\leq 5\delta.$$


\end{theorem}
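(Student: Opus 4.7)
The plan is to decouple the two stages by exploiting the very different step-size regimes and to reduce each stage to an invariant maintained by induction. The essential technical tool throughout is a closed-form expression for $\nabla_w \cL$ and $\nabla_a \cL$ under Gaussian input: writing $u_t = (\mathds{1}/\sqrt{p} + w_t)/\norm{\mathds{1}/\sqrt{p} + w_t}_2$, one can compute expectations of ReLU-quadratic products using arc-cosine kernel identities (as in \citet{du2017gradient}), yielding population gradients that depend on $a_t, a^*$ only through $\phi_t$ and certain inner products. Initialization gives $u_0 = \mathds{1}/\sqrt{p}$, so Assumption \ref{assump1} yields $\cos\phi_0 = u_0^\top v^* \ge 1/2$, i.e., $\phi_0 \le \pi/3 < 5\pi/12$. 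This is the critical leverage the shortcut provides: we start outside the basin of $-v^*$, which is what gets feedforward CNNs stuck in the left panel of Figure \ref{fig:converge}.

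For Stage I, the plan is to treat $w_t$ as nearly frozen and show that $a_t$ is driven toward $a^*$. Because $\eta_w = \tilde O(1/k^4)$ is a factor of $k^2$ smaller than $\eta_a = O(1/k^2)$, over $T_1 = \tilde O(1/\eta_a)$ iterations the cumulative drift of $u_t$ is at most $\tilde O(\eta_w T_1 \norm{\nabla_w \cL}_2) = \tilde O(1/k^2)$, so $\phi_t$ can move by only a small angle. With $u_t$ essentially constant, $\nabla_a \cL$ is an affine contraction toward a target $a^{\mathrm{eff}}(\phi_t)$ that, because $\phi_t$ stays bounded away from $\pi/2$, has nontrivial overlap with $a^*$. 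The step size $\eta_a = O(1/k^2)$ matches the operator norm of this contraction (the Hessian block in $a$ has spectral norm $\tilde O(k)$ from the $\mathds{1}\mathds{1}^\top$ term in Proposition \ref{prop:spurious}), so $a_t^\top a^*$ grows geometrically until it enters $[m,M]$; the upper bound $M$ comes from the fixed point of the $a$-iteration and the lower bound $m$ from running for $T_1 = \tilde O(1/\eta_a)$ steps. The induction step verifies simultaneously that $\phi_t \le 5\pi/12$ and $\norm{a_t}_2 = O(\norm{a^*}_2)$, the latter from the same contraction estimate.

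For Stage II, the plan is the standard local-basin argument: having $\phi \le 5\pi/12$ and $a^\top a^* \in [m,M]$ places us inside the basin of $(w^*, a^*)$. Define $\Phi_t = \norm{w_t - w^*}_2^2 + c \norm{a_t - a^*}_2^2$ for an appropriate constant $c$; using the closed-form gradient expressions, linearize $\nabla \cL$ around $(w^*, a^*)$ and show that the restricted Hessian (on the normalization manifold $\norm{\mathds{1}/\sqrt{p} + w}_2 = 1$) is positive definite with smallest eigenvalue $\tilde\Omega(1)$ and largest eigenvalue $\tilde O(k)$. With $\eta = \tilde O(1/k^2)$, a one-step descent lemma gives $\Phi_{t+1} \le (1 - \rho\eta)\Phi_t$ for some $\rho = \tilde\Omega(1)$, where the extra $1/k$ beyond the inverse-smoothness absorbs the cross-coupling between the $a$ and $w$ blocks. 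Iterating $T_2 = \tilde O((1/\eta)\log(1/\delta))$ times yields $\Phi_t \le \delta$, from which the stated bounds $\norm{w_t - w^*}_2^2 \le \delta$ and $\norm{a_t - a^*}_2^2 \le 5\delta$ follow by choosing $c$ large enough that the constant $5$ is absorbed.

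The hardest part will be the Stage I invariant, in particular controlling $\phi_t$ through the normalization step \eqref{normalization} while $a_t$ is still far from $a^*$. When $a_t^\top a^*$ is negative, $-\nabla_w \cL$ can point toward $-v^*$, precisely pulling $u_t$ toward the spurious local optimum of Proposition \ref{prop:spurious}; one must show this ``bad'' direction is active only for a short transient, and that the slow step size $\eta_w$ prevents $\phi_t$ from escaping the safe region before $a_t^\top a^*$ turns positive. This is where the explicit dependence of $\nabla_w \cL$ on $(a_t^\top a^*, \phi_t)$, combined with Assumption \ref{assump1} and the step-size ratio $\eta_w/\eta_a = \tilde O(1/k^2)$, must be exploited: once $a_t^\top a^*$ crosses a positive threshold, $-\nabla_w \cL$ acquires a component along $v^* - u_t$, and the quantitative monotonicity of this component under the normalization map is what prevents $\phi_t$ from crossing $5\pi/12$.
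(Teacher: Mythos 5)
Your proposal matches the paper's high-level architecture exactly: the shortcut gives $\phi_0\le\pi/3$, Stage~I keeps $\eta_w$ a factor $k^2$ smaller so $\phi_t$ drifts by only $\tilde O(\eta_w T_1)=\tilde O(1/k^2)$ while $a_t^\top a^*$ enters $[m,M]$, and Stage~II is a local-basin contraction. Your Stage~I implementation, however, is a genuinely different route than the paper's. The paper establishes a \emph{partial-dissipativity} inequality for $\nabla_a\cL$ in a region $\cA$ that explicitly allows $a^\top a^*\le \frac{1}{20}\norm{a^*}_2^2$ (Lemma~\ref{thm_pd_a}), together with an auxiliary invariant $-3(\mathds{1}^\top a^*)^2\le\mathds{1}^\top a^*\mathds{1}^\top a_t-(\mathds{1}^\top a^*)^2\le 0$ (Lemma~\ref{lem_sum_a}) that is needed to make the dissipativity region closed under the dynamics and to pin down $M$. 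You instead observe that, conditional on $\phi_t$, the $a$-update is an \emph{exact} affine contraction toward $a^{\mathrm{eff}}(\phi_t)=(\mathds{1}\mathds{1}^\top+(\pi-1)I)^{-1}(\mathds{1}\mathds{1}^\top+(g(\phi_t)-1)I)a^*$, which has positive overlap with $a^*$ whenever $\phi_t<\pi/2$. This fixed-point view is cleaner and subsumes the paper's Lemmas~\ref{thm_pd_a}, \ref{lem_escape}, \ref{lem_inner_a}, and \ref{lem_sum_a} in one stroke (the $\mathds{1}^\top a_t$ invariant would come out of tracking the $\mathds{1}$-component of $a_t-a^{\mathrm{eff}}$, and the constants $m,M$ out of $\min/\max$ of $a^{\mathrm{eff}\top}a^*$ over $\phi\le 5\pi/12$ plus the transient). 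So Stage~I is correct, arguably more transparent.

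Stage~II is where the approaches diverge more substantively and where your sketch has a real gap. The paper does \emph{not} linearize: it shows partial dissipativity of $\nabla_w\cL$ (Lemma~\ref{lem_pd_w}) holds globally in the set $\{a^\top a^*\ge m, \ v^\top v^*\ge 0\}$, drives $w_t\to w^*$ first (Lemma~\ref{converge_w}), and only then invokes partial dissipativity of $\nabla_a\cL$ with a $\delta$-error that decreases as $w$ tightens (Lemma~\ref{new_pda}, Lemma~\ref{converge_a}). This sequential decoupling is what produces the $5\delta$ in the theorem. Your plan is to run a single Lyapunov argument on $\Phi_t=\norm{w_t-w^*}_2^2+c\norm{a_t-a^*}_2^2$ via a positive-definite ``restricted Hessian.'' That would require (i) bounding the linearization remainder of $\nabla_w\cL$---which is not affine in $w$ because of the $(I-vv^\top)v^*$ factor and the spherical re-normalization step \eqref{normalization}---uniformly over the basin, not just infinitesimally, and (ii) certifying that the joint block matrix is positive definite after accounting for the off-diagonal $w$--$a$ coupling (the $g(\phi)$ term in $\nabla_a\cL$). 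Neither is automatic, and the ``extra $1/k$ absorbs the cross-coupling'' claim is doing a lot of unearned work; the paper's $\eta=\tilde O(1/k^2)$ actually comes from absorbing the $\eta^2\norm{\nabla\cL}_2^2$ term in a one-step expansion with gradient-norm bound $O(k)\norm{a-a^*}_2$, not from coupling. The paper's one-point inequalities are exact on the manifold and sidestep both issues; your Hessian route would need a nontrivial remainder bound before it closes. Also a minor slip: to get $\norm{a_t-a^*}_2^2\le 5\delta$ from $\Phi_t\le\delta$ you need $c\ge 1/5$, i.e.\ the weight on the $a$-block must not be too \emph{small}---the phrase ``choosing $c$ large enough'' should read the other way, and the actual constraint on $c$ comes from making the off-diagonal coupling subordinate to the diagonal decay.
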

Note that the set $\{(w_t, a_t) ~|~ a_t^\top a^* \in [m, M], \phi_t \leq \frac{5 \pi }{ 12} \}$ belongs to be the basin of attraction around the global optimum (Lemma \ref{lem_pd_w}), where certain regularity condition (partial dissipativity) guides the algorithm toward the global optimum. Hence, after the algorithm enters the second stage, we increase the step size $\eta_w$ of $w$ for a faster convergence. Figure \ref{fig:converge} demonstrates the initialization of $(w, a)$, and the convergence of GD both on CNN in \citet{du2017gradient} and our ResNet model.

We  start our convergence analysis with the definition of partial dissipativity for  $\cL$.
\begin{definition}[Partial Dissipativity]\label{SC}
Given any $\delta \geq 0$ and a constant $c\geq0$, $\nabla_w\cL$ is $(c,\delta)$-partially dissipative with respect to $w^*$ in a set $\cK_\delta$, if for every $(w,a)\in \cK_\delta,$ we have
\begin{align*}
	\langle-\nabla_w\cL(w,a),w^*-w\rangle\geq c\norm{w-w^*}_2^2-\delta;
\end{align*}
$\nabla_a\cL$ is $(c,\delta)$-partially dissipative with respect to $a^*$ in a set $\cA_\delta$, if for every $(w,a)\in \cA_\delta,$ we have
\begin{align*}
 \langle-\nabla_a\cL(w,a),a^*-a\rangle\geq c\norm{a-a^*}_2^2-\delta.
\end{align*}
Moreover, If $\cK_\delta\cap\cA_\delta\neq \emptyset,$  $\nabla \cL$ is $(c,2\delta)$-jointly dissipative with respect to $(w^*,a^*)$ in $\cK_\delta\cap\cA_\delta,$ i.e., for every $(w,a)\in \cK_\delta\cap\cA_\delta,$ we have
\begin{align*}
\langle-\nabla_w\cL(w,a),w^*-w\rangle+\langle-\nabla_a\cL(w,a),a^*-a\rangle\geq c(\norm{w-w^*}_2^2+\norm{a-a^*}_2^2)-2\delta.
\end{align*} 
\end{definition}
The concept of dissipativity is originally used in dynamical systems \citep{barrera2015thermalisation}, and is defined for general operators.  It suffices to instantiate the concept to gradients here for our convergence analysis.  The partial dissipativity for perturbed gradients is used in \cite{zhou2019toward} to study the convergence behavior of Perturbed GD. The variational coherence studied in \cite{zhou2017stochastic} and one point convexity studied in \cite{li2017convergence} can be viewed as  special examples of partial dissipativity.

\subsection{Stage I: Avoid the Spurious Local Optimum}

We first show with properly chosen step sizes, GD algorithm can avoid being trapped by the spurious local optimum.
We propose to update $w, a$ using different step sizes. We formalize our result in the following theorem.
\begin{theorem}\label{stage1_a}
Initialize with arbitrary $a_0 \in \mathbb{B}_0 \left(\frac{|\mathds{1}^\top a^*| }{ \sqrt{k}}\right)~\text{and}~w_0=0$. We choose step sizes $$\eta_a = \frac{\pi}{20(k+\pi-1)^2}=O\left(\frac{1}{k^2}\right), \quad \text{and}\quad \eta_w=C\norm{a^*}_2^2\eta_a^2=\tilde O(\eta_a^2)$$ for some constant $C>0.$ Then, we have 
\begin{align}
\label{stage1_a_result}
\phi_t\leq \frac{5\pi}{12}\quad \textrm{and}\quad 0\leq m\leq a_t^\top{a}^* \leq M,
\end{align} 
for all $t \in [T_1, T]$, where $$T_1=\tilde O\left(\frac{1}{\eta_a}\right),~~T=O\left(\frac{1}{\eta_a^2}\right),~~ m=\frac{1}{5}\norm{a^*}_2^2, ~~\textrm{and}~~M=3\norm{a^*}_2^2+2\left(\mathds{1}^\top a^*\right)^2.$$
\end{theorem}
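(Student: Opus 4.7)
The ratio $\eta_w = \tilde O(\eta_a^2) \ll \eta_a$ is a strong hint to decouple the two dynamics. I would begin by deriving closed-form expressions for $\cL$ via the standard Gaussian--ReLU integrals. Because the update \eqref{normalization} enforces $\norm{\mathds{1}/\sqrt{p} + w}_2 = 1$, writing $\bar{v} = \mathds{1}/\sqrt{p} + w$ and $\phi = \angle(\bar v, v^*)$, the loss reduces to
\begin{align*}
\cL(w, a) = \frac{\norm{a}_2^2 + \norm{a^*}_2^2}{4} - h(\phi)\, a^\top a^* + \frac{(\mathds{1}^\top(a - a^*))^2}{4\pi} - \frac{\norm{a - a^*}_2^2}{4\pi},
\end{align*}
with $h(\phi) = (\sin\phi + (\pi - \phi)\cos\phi)/(2\pi)$ and $h'(\phi) = -(\pi-\phi)\sin\phi/(2\pi) \le 0$. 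Consequently $\nabla_a \cL = \frac{1}{2\pi}[(\pi-1)I + \mathds{1}\mathds{1}^\top](a - a^*_\phi)$, where $a^*_\phi := [(\pi-1)I + \mathds{1}\mathds{1}^\top]^{-1}[\mathds{1}\mathds{1}^\top + (2\pi h(\phi) - 1)I]\,a^*$, and $\nabla_w \cL$ is proportional to $-h'(\phi)\, a^\top a^*$ along the (tangential) gradient of $\phi$.

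\textbf{Phase I: rapid $a$-contraction with nearly frozen $\phi$.} Holding $\phi$ fixed, the $a$-update is a linear contraction driven by $I - \frac{\eta_a}{2\pi}[(\pi-1)I + \mathds{1}\mathds{1}^\top]$ with spectral radius at most $1 - \eta_a(\pi-1)/(2\pi) = 1 - \Theta(\eta_a)$. With the chosen $\eta_a$, this forces $a_t \to a^*_\phi$ in $\tilde O(k^2 \log k) = \tilde O(1/\eta_a)$ steps. I would first show by induction that $\norm{a_t}_2 = \tilde O(1)$ throughout (the ball $\{\norm{a}_2 \le R\}$ is invariant for $R$ exceeding $\norm{a_0}_2$ and $\sup_\phi \norm{a^*_\phi}_2$). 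Combined with $|h'(\phi)| \le 1/2$, this gives $\norm{\nabla_w \cL}_2 = \tilde O(1)$ and hence per-step drift $|\phi_{t+1} - \phi_t| = \tilde O(\eta_w)$. Since Assumption~\ref{assump1} yields $\cos\phi_0 = 1 - \norm{w^*}_2^2/2 \ge 1/2$, i.e., $\phi_0 \le \pi/3$, the cumulative drift over $T_1$ iterations is $\tilde O(T_1\eta_w) = \tilde O(\eta_a) = o(1)$, giving $\phi_{T_1} \le 5\pi/12$. A direct computation, decomposing $a^* = (\mathds{1}^\top a^*/k)\mathds{1} + a^*_\perp$ and evaluating the coefficients in the eigenbasis of $(\pi-1)I + \mathds{1}\mathds{1}^\top$, shows $a^*_\phi \cdot a^* \ge \norm{a^*}_2^2/5$ for all $\phi \le 5\pi/12$, so the contraction delivers $a_{T_1}^\top a^* \ge m$; the upper bound $M$ follows by Cauchy--Schwarz from the $\norm{a_t}_2$-bound and the initialization condition.

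\textbf{Phase II: monotonicity for persistence on $[T_1, T]$.} To propagate the bounds from $\{T_1\}$ to the whole interval I would exploit the crucial sign structure: $h'(\phi) \le 0$, so whenever $a_t^\top a^* > 0$, the renormalized $w$-step moves $\bar v$ along the sphere toward $v^*$ and, to leading order,
\begin{align*}
\cos\phi_{t+1} - \cos\phi_t \;\approx\; \eta_w\, |h'(\phi_t)|\, (a_t^\top a^*)\, \sin\phi_t \;\ge\; 0.
\end{align*}
Thus $\phi_t$ is (quasi-)non-increasing once $a_t^\top a^*>0$ is established. The effect is self-reinforcing: a smaller $\phi$ keeps $a^*_\phi\cdot a^* \ge m$, and the fast $a$-relaxation keeps $a_t$ within an $O(\eta_w/\eta_a) = o(1)$ neighborhood of the slowly drifting target $a^*_{\phi_t}$, preserving $a_t^\top a^* \in [m, M]$ and $\phi_t \le 5\pi/12$ throughout $[T_1, T]$.

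\textbf{Main obstacle.} The central difficulty will be making the timescale separation rigorous via a coupled induction that simultaneously controls (i) contraction of $a_t$ toward the drifting target $a^*_{\phi_t}$ and (ii) drift of $\phi_t$ driven by a time-varying $a_t$, in such a way that neither dynamics pushes the other out of its invariant set. Quantitatively this is exactly what forces $\eta_w = \Theta(\eta_a^2)$: the $\phi$-perturbation accumulated over a single $a$-relaxation time $\tilde O(1/\eta_a)$ must be $\tilde O(\eta_a) = o(1)$, so that the monotonicity argument in Phase~II engages before $\phi_t$ can escape $[0, 5\pi/12]$; any larger $\eta_w$ would let $\phi$ cross the threshold while $a^\top a^*$ is still allowed to be negative.
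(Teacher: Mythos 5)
Your proposal is essentially correct but takes a genuinely different route from the paper. The paper decouples the two bounds entirely: Lemma~\ref{lem_small} establishes $\phi_t\le \tfrac{5\pi}{12}$ for \emph{all} $t\le T=O(1/\eta_a^2)$ by a pure accumulated-drift bound $\norm{v_{t+1}-v^*}_2 \le (1+2\eta_w\norm{a^*}_2^2)^t\norm{v_0-v^*}_2$, which does not use any information about the sign of $a_t^\top a^*$ (this is exactly why $\eta_w = O(\eta_a^2)$ is chosen: $\eta_w T = O(1)$ with a small enough constant); only then does the paper run the $a$-analysis (Lemmas~\ref{thm_pd_a}, \ref{lem_escape}, \ref{lem_inner_a}) conditional on the $\phi$-bound, using partial dissipativity rather than an explicit fixed point $a^*_\phi$. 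You instead bound $\phi$ by drift only up to $T_1$, then propagate via the monotonicity of $\phi$ given $a_t^\top a^*>0$, which creates the coupled induction you flag as the main obstacle; the paper's unconditional drift bound on $[0,T]$ avoids this coupling altogether and is technically simpler. Your contraction toward the drifting fixed point $a^*_\phi$ is mathematically equivalent to the paper's partial dissipativity of $\nabla_a\cL$ (since $\nabla_a\cL$ is affine in $a$), and your computation of $a^*_\phi\cdot a^*\ge\norm{a^*}_2^2/5$ for $\phi\le 5\pi/12$ is correct; however, the paper's partial-dissipativity formulation sidesteps having to track the moving target quantitatively and reads off the invariance of $[m,M]$ directly from a one-step recursion in $a_t^\top a^*$ (Lemma~\ref{lem_inner_a}). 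Two small remarks: (i) you need $\sum_j\mathds{1}^\top a_t$ control to make the contraction work in the $\mathds{1}$-direction — the paper isolates this as Lemma~\ref{lem_sum_a}, which your sketch implicitly relies on but doesn't state; (ii) your monotonicity claim for $\phi$ should be checked against the renormalization step, which introduces an $O(\eta_w^2\norm{\nabla_w\cL}_2^2)$ correction; with $\eta_w=\tilde O(\eta_a^2)$ this is harmless, but the paper's Theorem~\ref{large_etaw} shows exactly this computation and only invokes it in Stage~II where $\eta_w$ is larger.
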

\begin{proof}[Proof Sketch]
Due to the space limit, we only provide a proof sketch here. The detailed proof is deferred to Appendix \ref{pf_stage1_a}. 
 We prove the two arguments in \eqref{stage1_a_result} in order.
Before that, we first show our initialization scheme guarantees an important bound on $a,$ as stated in the following lemma. \begin{lemma}\label{lem_sum_a}
	Given $a_0 \in \mathbb{B}_0 \left(\frac{|\mathds{1}^\top a^*| }{ \sqrt{k}}\right),$ we choose $$\eta_a\leq \frac{2\pi}{k+\pi-1}.$$
	Then for any $t > 0$,
	\begin{align}\label{eq_sum_a}
-3\left(\mathds{1}^\top a^*\right)^2\leq \mathds{1}^\top a^*\mathds{1}^\top a_{t} - \left(\mathds{1}^\top a^*\right)^2
	\leq 0.
	\end{align}
\end{lemma} 

 Under  the shortcut prior assumption \ref{assump1} that $w_0$ is close to $w^*$, the update of $w$ should be more conservative to provide enough accuracy for $a$ to make progress. Based on Lemma \ref{lem_sum_a}, the next lemma shows that when $\eta_w$ is small enough, $\phi_t$ stays acute ($\phi_t<\frac{\pi}{2}$), i.e., $w$ is sufficiently away from $\bar w= - \frac{\mathds{1}}{\sqrt{p}} - v^*$ .
\begin{lemma}\label{lem_small}
	Given $w_0=0~\text{and}~a_0 \in \mathbb{B}_0 \left(\frac{|\mathds{1}^\top a^*| }{\sqrt{k}}\right),$ we choose $$\eta_a<\frac{2\pi}{k+\pi-1}
\text{	and }\eta_w=C\norm{a^*}_2^2\eta_a^2=\tilde O(\eta_a^2)$$ for some absolute constant $C>0$. Then for all $t\leq T=O\left(\frac{1}{\eta_a^2}\right)$,
	\begin{align}\label{lem_phi_acute}
	\phi_t\leq \frac{5\pi}{ 12}.
	\end{align}
\end{lemma}
We want to remark that  \eqref{eq_sum_a} and  \eqref{lem_phi_acute} are two  of the key conditions that define the partially dissipative region of $\nabla_a\cL$ , as shown in the following lemma.
\begin{lemma}\label{thm_pd_a}
	For any $(w,a)\in\cA,$ $\nabla_a \cL$ satisfies
	\begin{align}\label{eq_pda}
	\langle-\nabla_a\cL(w,a),{a}^*-a\rangle \geq \frac{1}{10\pi} \norm{a-{a}^*}_2^2,
	\end{align}
	where 
	\begin{align*}\cA=\bigg\{(w,a)~\big| a^\top a^*\leq {\frac{1}{20}}\norm{a^*}_2^2  \text{ or}~
	\norm{a-\frac{a^*}{2}}_2^2\geq\norm{a^*}_2^2,~ &\norm{w+\frac{\mathds{1}}{\sqrt{p}}}_2=1,\\&\phi \leq \frac{5}{12}\pi,
	~-3(\mathds{1}^\top{a}^*)^2\leq \mathds{1}^\top {a}^*\mathds{1}^\top a - (\mathds{1}^\top {a}^*)^2
	\leq  0\bigg\}.
	\end{align*}
\end{lemma}
Please refer to Appendix \ref{pf_thm_pd} for a detailed proof. Note that with arbitrary initialization of $a$,  $a^\top a^*\leq {\frac{1}{20}}\norm{a^*}_2^2$ or $	\norm{a-a^*/2}_2^2\geq\norm{a^*}_2^2$ possibly holds at $a_0.$ In this case,  $(w_0,a_0)$ falls in $\cA,$  and \eqref{eq_pda} ensures the improvement of $a.$ 

\begin{lemma}\label{lem_escape}
Given $(w_0,a_0)\in \cA,$  we choose $$\eta_a<\frac{\pi}{20(k+\pi-1)^2}.$$ Then there exists $\tau_{11}=O\left(\frac{1}{\eta_a}\right),$ such that
 $$\frac{1}{20}\norm{a^*}_2^2\leq a_{\tau_{11}}^\top a^*\leq 2\norm{a^*}_2^2.$$
\end{lemma}
One can easily verify that $a^\top a^*\leq 2\norm{a^*}_2^2$ holds for any $a\in \mathbb{B}_0 \left(\frac{|\mathds{1}^\top a^*| }{\sqrt{k}}\right).$ Together with Lemma \ref{lem_escape}, we claim that even with arbitrary initialization, the iterates can always enter the region with $a^\top a^*$ positive and bounded in polynomial time. The next lemma shows that with proper chosen step sizes,  $a^\top a^*$ stays positive and bounded. 
\begin{lemma}	\label{lem_inner_a} 
Suppose $\frac{1}{20}\norm{a^*}_2^2\leq a_0^\top a^*\leq 2\norm{a^*}_2^2,$
$\phi_t\leq \frac{5}{12}\pi$, and $ -3\left(\mathds{1}^\top a^*\right)^2
\leq \mathds{1}^\top a^*\mathds{1}^\top a_{t} - \left(\mathds{1}^\top a^*\right)^2
\leq  0$
holds for all $t$.
Choose $$\eta_a < \frac{2\pi}{\pi-1},$$ then we have for all $t\geq \tau_{12}=\tilde O\left(\frac{1 }{ \eta_a}\right),$ 
$$
\frac{1}{5}\norm{a^*}_2^2 \leq
a_{t}^\top a^*
\leq 3\norm{a^*}_2^2+2\left(\mathds{1}^\top a^*\right)^2.
$$
\end{lemma}
Take $T_1=\tau_{11}+\tau_{12},$ and we complete the proof.
\end{proof}
In Theorem \ref{stage1_a},  we choose a conservative $\eta_w$. This  brings two benefits to the training process: 1). $w$ stays away from $\bar{w}$. The update on $w$ is quite limited, since $\eta_w$ is small. Hence, $w$ is kept sufficiently away from $\bar{w}$, even if $w$ moves towards $\bar{w}$ in every iteration); 2). $a$ continuously updates toward $a^*.$ 

Theorem \ref{stage1_a} ensures that under the shortcut prior, GD with adaptive step sizes can successfully overcome the optimization challenge early in training, i.e., the iterate is sufficiently away from the spurious local optima at the end of Stage I. Meanwhile,  \eqref{stage1_a_result} actually demonstrates that the algorithm enters the basin of attraction of the global optimum, and  we  next show the convergence of GD.

\subsection{Stage II: Converge to the Global Optimum}

Recall that in the previous stage, we use a conservative step size $\eta_w$ to avoid being trapped by the spurious local optimum. However, the small step size $\eta_w$ slows down the convergence of $w$ in the basin of attraction of the global optimum.  Now we  choose larger step sizes to accelerate the convergence. The following theorem shows that, after Stage I, we can use a larger $\eta_w,$ while the results in Theorem \ref{stage1_a} still hold, i.e., the iterate stays in the basin of attraction of $(w^*,a^*)$.
\begin{theorem}\label{large_etaw}
We restart the counter of time. Suppose $m\leq a_0^\top a^*\leq M, $ and $\phi_0\leq \frac{5}{12}\pi.$ We choose $$\eta_w\leq  \frac{ m}{M^2}=\tilde O \left(\frac{1}{k^2}\right)\text{ and }\eta_a<\frac{2\pi}{k+\pi-1}.$$
Then  for all $t>0$, we have 
$$ \phi_t\leq \frac{5\pi}{12}\quad\textrm{and}\quad 0\leq m\leq a_t^\top{a}^* \leq M
.$$
\end{theorem}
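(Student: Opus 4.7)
The plan is to prove Theorem \ref{large_etaw} by induction on $t$, keeping the two invariants $m \leq a_t^\top a^* \leq M$ and $\phi_t \leq 5\pi/12$ alive at every iteration. The base case is the hypothesis of the theorem. For the inductive step, the $a$-update and $w$-update are handled separately; the $w$-update with the enlarged step size is the substantive new ingredient.

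For the $a$-update I would reuse the machinery from Stage I. The restriction $\eta_a < 2\pi/(k+\pi-1)$ is precisely what makes Lemma \ref{lem_sum_a} go through, so the constraint $-3(\mathds{1}^\top a^*)^2 \leq \mathds{1}^\top a^* \mathds{1}^\top a_t - (\mathds{1}^\top a^*)^2 \leq 0$ is preserved. Combined with the inductive hypotheses on $a_t^\top a^*$ and $\phi_t$, this places $(w_t, a_t)$ in the partially dissipative set $\cA$ of Lemma \ref{thm_pd_a}, yielding $\langle -\nabla_a \cL(w_t,a_t), a^* - a_t \rangle \geq \frac{1}{10\pi}\norm{a_t - a^*}_2^2$. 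Paired with a uniform bound on $\norm{\nabla_a \cL}_2$, the standard one-step contraction computation (verbatim to the proof of Lemma \ref{lem_inner_a}) keeps $a_{t+1}^\top a^*$ inside $[m, M]$.

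For the $w$-update I would invoke partial dissipativity of $\nabla_w \cL$ on the basin $\{a^\top a^* \in [m,M],\ \phi \leq 5\pi/12\}$ (Lemma \ref{lem_pd_w}), giving $\langle -\nabla_w \cL(w_t,a_t), w^* - w_t \rangle \geq c\,\norm{w_t - w^*}_2^2$ with $c = \Theta(m)$, together with a smoothness bound $\norm{\nabla_w \cL}_2 \leq L\,\norm{w_t - w^*}_2$ with $L = O(M)$. Writing $v_t = \mathds{1}/\sqrt{p} + w_t$ and $v^* = \mathds{1}/\sqrt{p} + w^*$ (both unit vectors), and noting that $\tilde v_{t+1} := v_t - \eta_w \nabla_w \cL$ is parallel to the post-normalization vector $v_{t+1}$ so that $\phi_{t+1} = \angle(\tilde v_{t+1}, v^*)$, the standard expansion
\begin{align*}
\norm{\tilde v_{t+1} - v^*}_2^2 \leq \left(1 - 2c\eta_w + L^2 \eta_w^2\right) \norm{v_t - v^*}_2^2
\end{align*}
becomes a genuine contraction under the choice $\eta_w \leq m/M^2$, which matches $\eta_w \leq c/L^2$ up to constants. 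A short calculation using $(r-1)(r+1-2\cos\phi_{t+1}) = \norm{\tilde v_{t+1} - v^*}_2^2 - \norm{v_{t+1} - v^*}_2^2$ (with $r = \norm{\tilde v_{t+1}}_2$) shows that the unit-sphere projection does not enlarge the distance when $r \geq 1$; for $r < 1$, the smallness of $\eta_w$ together with $\cos \phi_t \geq \cos(5\pi/12) > 0$ keeps the angle under the threshold. Using $\norm{v - v^*}_2^2 = 2(1 - \cos\phi)$ for unit $v, v^*$ then yields $\phi_{t+1} \leq \phi_t \leq 5\pi/12$.

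The main obstacle is pinning down the dissipativity coefficient $c = \Theta(m)$ and the smoothness constant $L = O(M)$ uniformly on the basin, so that the crisp bound $\eta_w \leq m/M^2$ is exactly what is needed. These scalings rest on the closed forms of $\EE_Z[\sigma(Z^\top u)\sigma(Z^\top v)]$ exploited throughout the paper, but the coupling between the layers through $a$ makes the bookkeeping delicate, since both $c$ and $L$ depend on $a_t$ through $a_t^\top a^*$ and $\mathds{1}^\top a_t$. A secondary technicality is the geometric accounting for the normalization step, which amounts to ruling out pathological shrinkage of $\norm{\tilde v_{t+1}}_2$ that could push $\phi_{t+1}$ across the $5\pi/12$ threshold.
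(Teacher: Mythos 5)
Your $w$-update argument is essentially the paper's: bound $\norm{\nabla_w\cL}_2$ by $O(M)\norm{v_t-v^*}_2$, use Lemma~\ref{lem_pd_w} on the basin, obtain a one-step contraction of $\norm{v_t - v^*}_2^2$ under $\eta_w \leq m/M^2$, and then argue that the normalization step does not increase the distance. One simplification you are missing: because $\nabla_w\cL \propto (I - v_t v_t^\top)v^*$ is orthogonal to $v_t$ and $\norm{v_t}_2 = 1$, the pre-normalized iterate satisfies $\norm{\tilde v_{t+1}}_2^2 = 1 + \eta_w^2\norm{\nabla_w\cL}_2^2 \geq 1$, so your $r < 1$ branch is vacuous and no separate argument for it is needed.

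The $a$-update part has a genuine gap. You claim that the inductive hypotheses $a_t^\top a^* \in [m,M]$, $\phi_t \leq \tfrac{5}{12}\pi$, together with Lemma~\ref{lem_sum_a}, place $(w_t,a_t)$ in the set $\cA$ of Lemma~\ref{thm_pd_a}. But $\cA$ requires either $a^\top a^* \leq \tfrac{1}{20}\norm{a^*}_2^2$ or $\norm{a - a^*/2}_2^2 \geq \norm{a^*}_2^2$, and with $a_t^\top a^* \geq m = \tfrac{1}{5}\norm{a^*}_2^2$ the first fails while the second is not implied. The set $\cA$ is precisely the region of Stage~I, used to \emph{enter} $[m,M]$ (Lemma~\ref{lem_escape}), not the region one stays in afterwards. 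Moreover, even where $\nabla_a\cL$ is dissipative, that gives a contraction of $\norm{a_t - a^*}_2^2$, which does not translate directly into upper and lower bounds on the scalar $a_t^\top a^*$. What actually works — and what the paper does — is to expand the one-step recursion explicitly,
\begin{align*}
a_{t+1}^\top a^* = \Bigl(1-\tfrac{\eta_a(\pi-1)}{2\pi}\Bigr)a_t^\top a^* + \tfrac{\eta_a (g(\phi_t)-1)}{2\pi}\norm{a^*}_2^2 + \tfrac{\eta_a}{2\pi}\bigl((\mathds{1}^\top a^*)^2 - \mathds{1}^\top a^*\mathds{1}^\top a_t\bigr),
\end{align*}
and bound it from below using $g(\phi_t) \geq g(\tfrac{5}{12}\pi)$ (available since $\phi_t \leq \tfrac{5}{12}\pi$) and $\mathds{1}^\top a^*\mathds{1}^\top a_t \leq (\mathds{1}^\top a^*)^2$ (Lemma~\ref{lem_sum_a}), then split into the cases $a_t^\top a^*$ below or above $\tfrac{C_2}{\pi-1}\norm{a^*}_2^2$ to conclude $a_{t+1}^\top a^* \geq m$; the upper bound follows the corresponding argument in Lemma~\ref{lem_inner_a}. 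Your proposal should replace the appeal to Lemma~\ref{thm_pd_a} with this direct recursion.
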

\begin{proof}[Proof Sketch]
To prove the first argument, we need the partial dissipativity  of $\nabla_w \cL$.
\begin{lemma}\label{lem_pd_w}
 For any $m>0$, $\nabla_w \cL$ satisfies
	\begin{align*}
	\langle-\nabla_w\cL(w,a),{w}^*-w\rangle \geq \frac{m}{8}\norm{w-{w}^*}_2^2,
	\end{align*}
	for any $(w,a)\in\mathcal{K}_{m}$, where
	\begin{align*}
	\mathcal{K}_{m}=\left\{(w,a)~\big|~ a^\top {a}^*\geq m,~ \left(w+\frac{\mathds{1}}{\sqrt{p} }\right)^\top v^*\geq 0,~\norm{w+\frac{\mathds{1}}{\sqrt{p} }}_2=1\right\}.
	\end{align*}
\end{lemma}
This condition ensures that when $a^\top {a}^*$ is positive, $w$ always makes positive progress towards $w^*,$ or equivalently $\phi_t$ decreasing.  We need not worry about $\phi_t$ getting obtuse, and thus a larger step size $\eta_w$ can be adopted. The second argument can be proved following similar lines to Lemma \ref{lem_inner_a}.  Please see Appendix \ref{pf_large_etaw} for more details.
\end{proof}
Now we are ready to show the convergence of our GD algorithm. Note that  Theorem \ref{large_etaw} and Lemma \ref{lem_pd_w} together show that the iterate stays in the partially dissipative region $\cK_w,$ which leads to the convergence of $w.$ Moreover, as shown in the following lemma, when $w$ is accurate enough, the partial  gradient with respect to $a$ enjoys partial dissipativity.
\begin{lemma}\label{new_pda}
	For any $\delta>0,$ $\nabla_a \cL$ satisfies
	\begin{align*}
	\langle-\nabla_a\cL\left(w,a\right),a^*-a\rangle\geq \frac{\pi-1}{2\pi}\norm{a-a^*}_2^2 -\frac{1}{5}\delta,
	\end{align*}
	for any $(w,a)\in\cA_{m,M,\delta}$, where 
	\begin{align*}
	{\cA}_{m,M,\delta}=\left\{(w,a)~\big|~ a^\top {a}^*\in[m,M],~ \norm{{w}-w^*}_2^2\leq \delta,~\norm{w+\frac{\mathds{1}}{\sqrt{p} }}_2=1\right\}.
	\end{align*}
\end{lemma}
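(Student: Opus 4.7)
The plan is to derive a closed-form expression for $\nabla_a \cL$ and split the inner product $\langle -\nabla_a\cL,\,a^*-a\rangle$ into a principal contraction term (controlled by $\lambda_{\min}(H)$) plus a perturbation that vanishes as $w\to w^*$. First, I would compute the gradient via the standard ReLU--Gaussian second-moment identity on the unit sphere: writing $v=\mathds{1}/\sqrt{p}+w$ (unit by assumption) and $\phi=\angle(v,v^*)$, one obtains
\[
\nabla_a\cL(w,a) \;=\; H a \;-\; G(\phi)\,a^{*},
\]
where $H=\tfrac{\pi-1}{2\pi}I+\tfrac{1}{2\pi}\mathds{1}\mathds{1}^{\top}$ is independent of $w$, and $G(\phi)$ agrees with $H$ off the diagonal but has diagonal entries $\tfrac{\sin\phi+(\pi-\phi)\cos\phi}{2\pi}$. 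Consequently $H-G(\phi)=\alpha(\phi)\,I$ with $\alpha(\phi)=\tfrac{\pi-\sin\phi-(\pi-\phi)\cos\phi}{2\pi}\ge 0$ and $\alpha(0)=0$.

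Second, I would rewrite $\nabla_a\cL = H(a-a^*)+\alpha(\phi)\,a^*$, so that
\[
\langle -\nabla_a\cL,\,a^*-a\rangle \;=\; (a^*-a)^{\top}H(a^*-a) \;+\; \alpha(\phi)\,a^{*\top}(a-a^*).
\]
Since $\lambda_{\min}(H)=(\pi-1)/(2\pi)$, the quadratic term is bounded below by $\tfrac{\pi-1}{2\pi}\|a-a^*\|_2^2$, matching the desired leading constant exactly. What remains is to absorb the cross term into $-\tfrac{1}{5}\delta$.

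Third, I would bound $\alpha(\phi)$ in terms of $\delta$. Differentiating yields $\alpha'(\phi)=(\pi-\phi)\sin\phi/(2\pi)$, and integrating from $0$ gives $0\le\alpha(\phi)\le\phi^2/4$. On the unit sphere, $\|w-w^*\|_2^2=\|v-v^*\|_2^2=4\sin^2(\phi/2)\ge 4\phi^2/\pi^2$ for $\phi\in[0,\pi]$, so $\phi^2\le\pi^2\delta/4$ and therefore $\alpha(\phi)\le\pi^2\delta/16$. Combined with the Stage-I bound $a^\top a^*\in[m,M]$, which gives
\[
|a^{*\top}(a-a^*)| \;=\; \bigl|\,a^{*\top}a-\|a^*\|_2^2\,\bigr| \;\le\; \max\bigl(\|a^*\|_2^2-m,\ M-\|a^*\|_2^2\bigr),
\]
a bounded constant (depending only on $\|a^*\|_2$ and the Stage-I parameters from Theorem \ref{stage1_a}), the perturbation satisfies $|\alpha(\phi)\,a^{*\top}(a-a^*)|\le \tfrac{1}{5}\delta$ once the explicit expressions for $m$ and $M$ are substituted, with any residual $\mathrm{poly}(\|a^*\|_2)$ factors absorbed into $\tilde O$.

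The main obstacle will be the bookkeeping in this last step: the perturbation must be controlled cleanly in terms of $\delta$ alone, rather than a coupled quantity of the form $\sqrt{\delta}\,\|a-a^*\|_2$, so as not to erode the sharp coefficient $\tfrac{\pi-1}{2\pi}$ in front of $\|a-a^*\|_2^2$. Preserving this exact constant is essential because it equals $\lambda_{\min}(H)$ and therefore dictates the geometric contraction rate of the $a$-iterates used in the Stage-II convergence argument.
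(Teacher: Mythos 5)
Your proposal is correct and takes essentially the same route as the paper: split $\nabla_a\cL$ into the fixed quadratic part $H=\frac{1}{2\pi}(\mathds{1}\mathds{1}^\top+(\pi-1)I)$ acting on $a-a^*$ plus a scalar perturbation $\alpha(\phi)=\frac{\pi-g(\phi)}{2\pi}$ times $a^*$, lower-bound the quadratic term by $\lambda_{\min}(H)=\frac{\pi-1}{2\pi}$, and absorb the cross term using $\|w-w^*\|_2^2\le\delta$ and the Stage-I bounds on $a^\top a^*$. The only substantive difference is how you control $\alpha(\phi)$: you integrate $\alpha'(\phi)=\frac{(\pi-\phi)\sin\phi}{2\pi}\le\phi/2$ to get $\alpha(\phi)\le\phi^2/4$ and then invoke $4\sin^2(\phi/2)\ge 4\phi^2/\pi^2$, arriving at $\alpha(\phi)\le\pi^2\delta/16$; the paper instead uses the sharper direct inequality $g(\phi)\ge\pi\cos\phi$ (equivalently $\tan\phi\ge\phi$), so that $\cos\phi\ge 1-\delta/2$ immediately gives $\alpha(\phi)\le\delta/4$. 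The paper's bound is tighter by a factor of about $2.5$ and avoids the detour through $\phi^2$; it also uses only the one-sided bound $a^{*\top}(a-a^*)\ge m-\|a^*\|_2^2=-\frac{4}{5}\|a^*\|_2^2$ (since $\alpha(\phi)\ge 0$, only the lower tail matters) rather than your two-sided $|a^{*\top}(a-a^*)|\le\max(\|a^*\|_2^2-m,\,M-\|a^*\|_2^2)$. Both versions leave a residual $\mathrm{poly}(\|a^*\|_2)$ factor in front of $\delta$ that the paper sweeps into the convention that $\|a^*\|_2$ is a constant, so your acknowledgment of that bookkeeping is on point.
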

As a direct result, $a$ converges to $a^*$. The next theorem formalize the above discussion. 

\begin{theorem}[Convergence]\label{converge}
	Suppose $\frac{1}{5}\norm{a^*}_2^2=m\leq a_{t}^\top a^*
	\leq M=3\norm{a^*}_2^2+2\left(\mathds{1}^\top a^*\right)^2$ hold for all $t>0.$ For any $\delta>0,$ choose $$\eta_a=\eta_w=\eta=\min\left\{ \frac{m}{2M^2},\frac{5\pi^2}{4\left(k+\pi-1\right)^2}\right\}=\tilde O\left(\frac{1}{k^2}\right),$$then we have  
	$$\norm{w_{t}-w^*}_2^2\leq \delta~~\text{and}~~\norm{a_{t}-a^*}_2^2\leq 5\delta$$
for any $t\geq T_2=\tilde O\left(\frac{1}{\eta}\log\frac{1}{\delta}\right).$ 
\end{theorem}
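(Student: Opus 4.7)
The plan is to combine the invariance supplied by Theorem \ref{large_etaw} with the two partial-dissipativity results (Lemmas \ref{lem_pd_w} and \ref{new_pda}) to obtain linear contraction first for $w$, and then, once $w$ is accurate, for $a$. Because Lemma \ref{lem_pd_w} only requires $a^\top a^* \geq m$ and $\phi \leq \pi/2$ (not that $a$ is close to $a^*$), $w$ can be driven to $w^*$ on its own, and $a$ is handled afterwards with $w$-error appearing only as a bias term of size $O(\delta)$.

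First I would carry out the $w$-iteration. Theorem \ref{large_etaw} (applied with the current choice of $\eta$, which is no larger than its prescribed bound) guarantees $(w_t,a_t)\in \cK_m$ for every $t$, so Lemma \ref{lem_pd_w} applies. Expanding the squared distance after the gradient-plus-normalization step,
\begin{align*}
\|\tilde w_{t+1}-w^*\|_2^2 = \|w_t-w^*\|_2^2 - 2\eta\langle \nabla_w\cL(w_t,a_t),\,w_t-w^*\rangle + \eta^2\|\nabla_w\cL(w_t,a_t)\|_2^2.
\end{align*}
The cross term is at most $-2\eta(m/8)\|w_t-w^*\|_2^2$ by Lemma \ref{lem_pd_w}, while a routine Lipschitz bound $\|\nabla_w\cL(w,a)\|_2 \lesssim M\,\|w-w^*\|_2$ (uses $a^\top a^*\leq M$ and $\|v^*\|_2=1$) makes the last term at most $\eta^2 M^2 \|w_t-w^*\|_2^2$. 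The choice $\eta \leq m/(2M^2)$ absorbs the quadratic term into half of the linear term. Finally, the normalization step is a projection of $\mathds{1}/\sqrt p+\tilde w_{t+1}$ onto the unit sphere; since $\mathds{1}/\sqrt p+w^*$ already lies on that sphere, this projection is non-expansive with respect to $w^*$, so $\|w_{t+1}-w^*\|_2\leq \|\tilde w_{t+1}-w^*\|_2$. Iterating gives $\|w_t-w^*\|_2^2 \leq (1-m\eta/16)^t\|w_0-w^*\|_2^2$, which falls below $\delta$ after $T_w=\tilde O(\frac{1}{\eta}\log\frac{1}{\delta})$ steps.

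Next I would handle $a$. Once $\|w_t-w^*\|_2^2\leq \delta$, Theorem \ref{large_etaw} places $(w_t,a_t)$ in $\cA_{m,M,\delta}$, so Lemma \ref{new_pda} applies. The same one-step expansion yields
\begin{align*}
\|a_{t+1}-a^*\|_2^2 \leq \|a_t-a^*\|_2^2 - 2\eta\cdot\tfrac{\pi-1}{2\pi}\|a_t-a^*\|_2^2 + \tfrac{2\eta}{5}\delta + \eta^2\|\nabla_a\cL(w_t,a_t)\|_2^2.
\end{align*}
A Lipschitz-in-$a$ estimate $\|\nabla_a\cL(w,a)\|_2^2\lesssim (k+\pi-1)^2(\|a-a^*\|_2^2+\delta)$ (which follows because the Hessian block in $a$ has $\ell_2$-operator norm controlled by $k+\pi-1$ and the $w$-contribution to the residual is $O(\sqrt\delta)$) lets the choice $\eta\leq 5\pi^2/(4(k+\pi-1)^2)$ absorb the $\eta^2$ term into a fraction of the $\eta$-linear one. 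What remains is a contractive recursion with fixed-point bias $O(\delta)$, which after another $\tilde O(\frac{1}{\eta}\log\frac{1}{\delta})$ iterations gives $\|a_t-a^*\|_2^2\leq 5\delta$. Adding the two phase lengths and restarting the counter at $t=0$ (the $w$-contraction still runs during the $a$-phase, so $\|w_t-w^*\|_2^2\leq \delta$ is preserved) produces the announced $T_2=\tilde O(\frac{1}{\eta}\log\frac{1}{\delta})$.

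The main obstacle is the pair of Lipschitz-type bounds on $\|\nabla_w\cL\|_2$ and $\|\nabla_a\cL\|_2$ that must be proved locally inside the invariant region; these are precisely what pin down the specific step-size thresholds $m/(2M^2)$ and $5\pi^2/(4(k+\pi-1)^2)$ in the statement. They require carefully pushing the ReLU nonlinearity and the normalization through the gradient computation, and in the $a$-direction one has to separate the residual into an $a$-component bounded by $\|a-a^*\|_2$ and a $w$-component bounded by $\sqrt\delta$ to get a clean contraction-plus-bias form. The accompanying verification that the unit-sphere projection is non-expansive toward any point on the sphere is straightforward but must be stated explicitly so that the contraction obtained before normalization is preserved afterwards.
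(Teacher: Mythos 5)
Your proposal matches the paper's proof essentially verbatim: the paper also splits into a $w$-phase (Lemma \ref{converge_w}, driven by the partial dissipativity of $\nabla_w\cL$ in Lemma \ref{lem_pd_w} plus the non-expansiveness of the unit-sphere normalization toward $w^*$) followed by an $a$-phase (Lemma \ref{converge_a}, driven by Lemma \ref{new_pda} once $\|w_t-w^*\|_2^2\leq\delta$), with $T_2=\tau_{21}+\tau_{22}$. The only differences are small constants in the contraction factors, which do not affect the $\tilde O(\tfrac1\eta\log\tfrac1\delta)$ rate.
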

\begin{proof}[Proof Sketch]
The detailed proof is provided in Appendix \ref{pf_converge}. Our proof  relies on the partial dissipativity of $\nabla_w \cL$   (Lemma \ref{lem_pd_w}) and that of $\nabla_a \cL$  (Lemma \ref{new_pda}).

Note that the partial dissipative region  ${\cA}_{m,M,\delta},$ depends on the precision of $w.$ Thus, we first  show the convergence of $w.$
\begin{lemma}[Convergence of $w_t$]\label{converge_w}
Suppose $\frac{1}{5}\norm{a^*}_2^2=m\leq a_{t}^\top a^*
	\leq M=3\norm{a^*}_2^2+4\left(\mathds{1}^\top a^*\right)^2$  hold for all $t>0.$ For any $\delta>0,$ choose$$\eta\leq\frac{m}{2M^2}=\tilde O\left(\frac{1}{k^2}\right),$$ then we have  
	$$\norm{w_{t}-w^*}_2^2\leq \delta$$
for any $t\geq\tau_{21}=\frac{4}{m\eta}\log\frac{4}{\delta}=\tilde O\left(\frac{1}{\eta}\log\frac{1}{\delta}\right).$
\end{lemma}
Lemma \ref{converge_w} implies that after $\tau_{21}$ iterations, the algorithm enters  ${\cA}_{m,M,\delta}.$ Then we show the convergence property of $a$  in next lemma.
\begin{lemma}[Convergence of $a_t$]\label{converge_a}
Suppose $m\leq a_{t}^\top a^*
\leq M$ and $\norm{{w}_{t}-w^*}_2^2\leq \delta$ holds for all $t.$ We choose $$\eta\leq \frac{5\pi^2}{4\left(k+\pi-1\right)^2}=O\left(\frac{1}{k^2}\right).$$ Then for all $t\geq \tau_{22}= \frac{4}{\eta}\log\frac{\norm{a_0-a^*}_2^2}{\delta}=\tilde O\left(\frac{1}{\eta}\log\frac{1}{\delta}\right),$ we have 
$$\norm{a_{t}-a^*}_2^2\leq 5\delta.$$
	\end{lemma}
Combine the above two lemmas together, take $T_2=\tau_{21}+\tau_{22}$, and we complete the proof.
\end{proof}
Theorem \ref{converge} shows that with larger $\eta_w$ than in Stage I, GD converges to the global optimum in polynomial time. Compared to the convergence with constant probability for CNN \citep{du2017gradient},  Assumption \ref{assump1} assures convergence even under arbitrary initialization of $a.$ This partially justifies the importance of shortcut in ResNet. 


\section{Numerical Experiment}\label{sec:experiment}

We present numerical experiments to illustrate the convergence of the GD algorithm. We first demonstrate that with the shortcut prior, our choice of step sizes and the initialization guarantee the convergence of GD. We consider the training of a two-layer non-overlapping convolutional ResNet by solving \eqref{optimization1}. Specifically, we set $p = 8$ and $k \in \{16, 25, 36, 49, 64, 81, 100\}$. The teacher network is set with parameters $a^*$ satisfying $\mathds{1}^\top a^* = \frac{1}{4} \norm{a^*}_2^2$, and $v^*$ satisfying  $v^*_1 = \cos(7\pi/10)$, $v^*_2 = \sin(7\pi/10)$, and $v^*_j = 0$ for $j = 3, \dots, p.$\footnote{$v^*$ essentially satisfies $\angle (v^*, \mathds{1}/\sqrt{p}) = 0.45 \pi$.} More detailed experimental setting is provided in Appendix \ref{expsetting}. We initialize with $w_0 = 0$ and  $a_0$ uniformly distributed over $\BB_0(|\mathds{1}^\top a^*| / \sqrt{k})$. We adopt the following learning rate scheme with Step Size Warmup (SSW) suggested in Section \ref{sec:sketch}: We first choose step sizes $\eta_a = 1/k^2$ and $\eta_w = \eta_a^2$, and run for $1000$ iterations. Then, we choose $\eta_a = \eta_w = 1/k^2$. We also consider learning the same teacher network using step sizes $\eta_w = \eta_a = 1/k^2$ throughout, i.e., without step size warmup. 

We further demonstrate learning the aforementioned teacher network using a student network of the same architecture. Specifically, we keep $a^*, v^*$ unchanged. We use the GD in \citet{du2017gradient} with step size $\eta = 0.1$, and initialize $v_0$ uniformly distributed over the unit sphere and $a$ uniformly distributed over $\BB_0(|\mathds{1}^\top a^*| / \sqrt{k})$.

For each combination of $k$ and $a^*$, we repeat $5000$ simulations for aforementioned three settings, and report the success rate of converging to the global optimum in Table \ref{tab:rate} and Figure \ref{fig:rate}. As can be seen, our GD on ResNet can avoid the spurious local optimum, and converge to the global optimum in all $5000$ simulations. However, GD without SSW can be trapped in the spurious local optimum. The failure probability diminishes as the dimension increases. Learning the teacher network using a two-layer CNN student network  \citep{du2017gradient}  can also be trapped in the spurious local optimum.
\begin{figure}
	\centering
	\includegraphics[width = 0.5\textwidth]{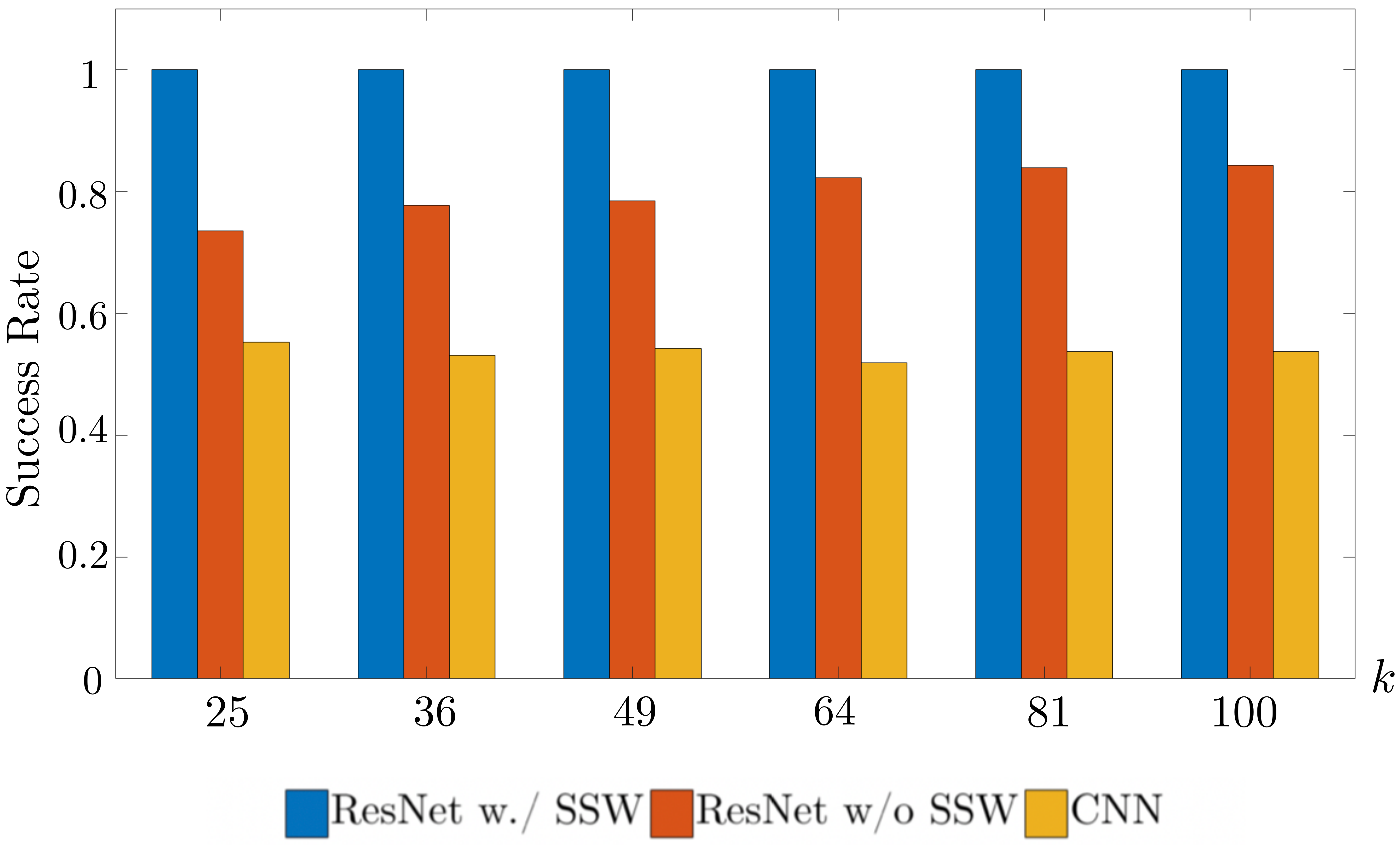}
	\caption{Success rates of converging to the global optimum for GD training ResNet with and without SSW and CNN with varying $k$ and and $p=8.$}
	\label{fig:rate}
\end{figure}
\begin{table*}[htb!]
	\vspace{-0.05in}
	\caption{\it Success rates of converging to the global optimum for GD training ResNet with and without SSW and CNN with varying $k$ and and $p=8$.
		\vspace{-0.05in}
	}
	
	\begin{center}
		\begin{tabular}{lccccccc}
			\hline
			$k$   &16 & 25& 36 & 49 & 64 & 81 & 100 \\
			\hline
			ResNet w/ SSW & 1.0000 & 1.0000& 1.0000 & 1.0000 & 1.0000 & 1.0000 &1.0000 \\
			ResNet w/o SSW &0.7042 & 0.7354 & 0.7776 &0.7848 &0.8220 &0.8388 &0.8426 \\
			CNN &0.5348 &0.5528 &0.5312&0.5426&0.5192&0.5368 &0.5374\\
			\hline
		\end{tabular}
	\end{center}
	\label{tab:rate}
	\vspace{-0.1in}
\end{table*}

We then demonstrate the algorithmic behavior of our GD. We set $k = 25$ for the teacher network, and other parameters the same as in the previous experiment. We initialize $w_0 = 0$ and $a_0 \in \BB_0\left(\frac{|\mathds{1}^\top a^*| }{\sqrt{k}}\right)$. We start with $\eta_a = 1/k^2$ and $\eta_w = \eta_a^2$. After $1000$ iterations, we set the step sizes $\eta_a = \eta_w = 1/k^2$. The algorithm is terminated when $\norm{a_t - a^*}_2^2 + \norm{w_t - w^*}_2^2 \leq 10^{-6}$. We also demonstrate the GD algorithm without SSW at the same initialization. The step sizes are $\eta_a = \eta_w = 1/k^2$ throughout the training.

\begin{figure}
\centering
\includegraphics[width = 0.74\textwidth]{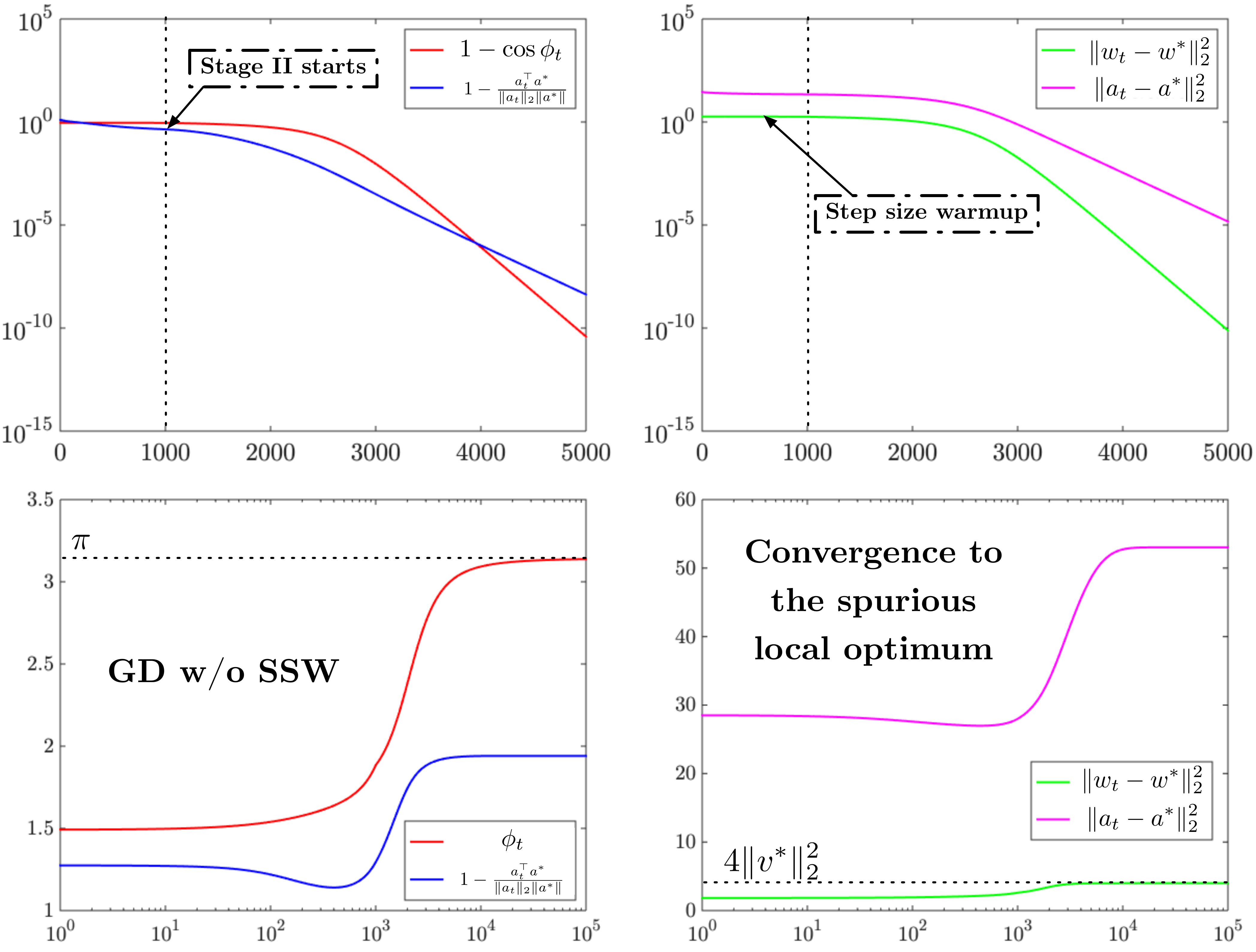}
\caption{Algorithmic behavior of GD on ResNet. The horizontal axis corresponds to the number of iterations.
	}
\label{fig:GD}

\end{figure}
One solution path of GD with SSW is shown in the first row of Figure \ref{fig:GD}. As can be seen, the algorithm has a phase transition. In the first stage, we observe that $w_t$ makes very slow progress due to the small step size $\eta_w$. While $a_t^\top a^*$ gradually increases. This implies the algorithm avoids being attracted by the spurious local optimum. In the second stage, $w_t$ and $a_t$ both continuously evolve towards the global optimum. 

The second row of Figure \ref{fig:GD} illustrates the trajectory of GD without SSW being trapped by the spurious local optimum. Specifically, $(w_t, a_t)$ converges to $(\bar{w}, \bar{a})$ as we observe that $\phi_t$ converges to $\pi$, and $\norm{w_t - w^*}_2^2$ converges to $4\norm{v^*}_2^2$.


\section{Discussions}\label{sec:discuss}

\textbf{Deep ResNet}. Our two-layer network model is largely simplified compared with deep and wide ResNets in practice, where the role of the shortcut connection is more complicated. It is worth mentioning that the empirical results in \citet{veit2016residual} show that ResNet can be viewed as an ensemble of smaller networks, and most of the smaller networks are shallow due to the shortcut connection. They also suggest that the training is dominated by the shallow smaller networks. We are interested in investigating whether these shallow smaller networks possesses similar benign properties to ease the training as our two-layer model.

Moreover, our student network and the teacher network have the same degree of freedom. We have not considered deeper and wider student networks. It is also worth an investigation that what is the role of shortcut connections in deeper and wider networks.

\noindent\textbf{From GD to SGD}. A straightforward extension is to investigate the convergence of SGD with mini-batch. We remark that when the batch size is large, the effect of the noise on gradient is limited and SGD mimics the behavior of GD. When the batch size is small, the noise on gradient plays a significant role in training, which is technically more challenging.



\noindent\textbf{Related Work}. \citet{li2017convergence} study ResNet-type two-layer neural networks with the output weight known  ($a=\mathds{1}$), which is equivalent to assuming $a_t^\top a^* > 0$ for all $t$ in our analysis. Thus, their analysis does not have Stage I ($a_0^\top a^*<0$). Moreover, since they do not need to optimize $a$, they only need to handle the partial dissipativity of $\nabla\cL_w$ with $\delta = 0$ (one-point convexity). In our analysis, however, we also need to handle the the partial dissipativity of $\nabla\cL_a$ with $\delta \neq 0,$ which makes our proof more involved.

\noindent\textbf{Initialization}. Our analysis shows that GD converges to the global optimum, when $w$ is initialized at zero. Empirical results in \citet{li2016demystifying} and \citet{zhang2019fixup} also suggest that deep ResNet works well, when the weights are simply initialized at zero or using the Fixup initialization. We are interested in building a connection between training a two-layer ResNet and its deep counterpart.


\noindent\textbf{Step Size Warmup}. Our choice of step size $\eta_w$ is related to the learning rate warmup and layerwise learning rate in the existing literature. Specifically, \citet{goyal2017accurate} presents an effective learning rate scheme for training ResNet on ImageNet for less than $1$ hour. They start with a small step size, gradually increase (linear scale) it, and finally shrink it for convergence. Our analysis suggests that in the first stage, we need smaller $\eta_w$ to avoid being attracted by the spurious local optimum. This is essentially consistent with \citet{goyal2017accurate}. Note that we are considering GD (no noise), hence, we do not need to shrink the step size in the final stage. While \citet{goyal2017accurate} need to shrink the step size to control the noise in SGD. Similar learning rate schemes are proposed by \citet{smith2017cyclical}.

On the other hand, we incorporate the shortcut prior, and adopt a smaller step size for the inner layer, and a larger step size for the outer layer. Such a choice of step size is shown to be helpful in both deep learning and transfer learning \citep{singh2015layer, howard2018universal}, where it is referred to as differential learning rates or discriminative fine-tuning. It is interesting to build a connection between our theoretical discoveries and these empirical observations.


\bibliographystyle{ims}
\bibliography{ref}
\appendix
\newpage
\noindent\rule[0.5ex]{\linewidth}{4pt}
\begin{center}
\textbf{\LARGE Supplementary Material for Understanding the Importance of Shortcut Connections in ResNet}
\end{center}
\noindent\rule[0.5ex]{\linewidth}{1pt}

\section{Preliminaries}
We first provide the explicit forms of the loss function and its gradients with respect to $w$ and $a.$ 
\begin{proposition}\label{prop:form}
Let $\phi=\angle( {\mathds{1}/\sqrt{p} +{w}},v^*).$When $\norm{\mathds{1}/\sqrt{p}+w}_2=1$, the loss function $\cL\left(w,a\right)$ and the gradient w.r.t $\left(w,a\right)$, i.e., $\nabla_a\cL\left(w,a\right)$ and $\nabla_w\cL\left(w,a\right)$ have the following analytic forms.
\begin{align*}
\cL\left(w,a\right) &=\frac{1}{2}[\frac{\left(\pi-1\right)}{2\pi}\norm{a^*}_2^2
+\frac{\left(\pi-1\right)}{2\pi}\norm{a}_2^2
-\frac{1}{\pi}\left(g\left(\phi\right)-1\right)a^\top a^*\\
&\hspace{2.0in}+\frac{1}{2\pi}\left(\mathds{1}^\top a^*\right)^2
+\frac{1}{2\pi}\left(\mathds{1}^\top a\right)^2
-\frac{1 }{\pi}\mathds{1}^\top a^* a^\top \mathds{1}],\\
\nabla_{a} \cL(a, w) & = \frac{1}{2\pi} (\mathds{1}\mathds{1}^\top+(\pi-1))a-\frac{1}{2\pi}(\mathds{1}\mathds{1}^\top+(g(\phi)-1))a^*,\notag \\
 \nabla_w\cL\left(w,a\right)
&=-\frac{a^\top a^* \left(\pi -\phi\right)}{2\pi}\left(I-(\mathds{1}/\sqrt{p}+w)(\mathds{1}/\sqrt{p}+w)^\top\right)v^*,
\end{align*}
where $g(\phi)=(\pi-\phi)\cos(\phi)+\sin(\phi).$
\end{proposition}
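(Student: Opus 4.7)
The plan is to expand the squared loss, evaluate the expectations with two Gaussian identities, and then differentiate. Under the assumption $\|\mathds{1}/\sqrt{p}+w\|_2=1$, set $u=\mathds{1}/\sqrt{p}+w$, which is a unit vector with $u^\top v^*=\cos\phi$. Then $Z_j^\top u$ and $Z_j^\top v^*$ are jointly standard normal with correlation $\cos\phi$. The two identities I will use are: for $X\sim\cN(0,1)$, $\EE[\sigma(X)]=1/\sqrt{2\pi}$; and for unit vectors $u_1,u_2\in\RR^p$ with angle $\phi$ and $Z\sim\cN(0,I)$, the arc-cosine kernel identity
$$\EE[\sigma(Z^\top u_1)\sigma(Z^\top u_2)]=\tfrac{1}{2\pi}\bigl((\pi-\phi)\cos\phi+\sin\phi\bigr)=\tfrac{1}{2\pi}g(\phi),$$
which follows by writing $Z^\top u_2=\cos\phi\,(Z^\top u_1)+\sin\phi\,W$ with $W\perp Z^\top u_1$ and integrating $\EE[(X)_+(\rho X+\sqrt{1-\rho^2}W)_+]$ in polar coordinates.

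Next I would expand $\cL=\tfrac12\EE[(g-f)^2]$ as three double sums over $j,l\in\{1,\dots,k\}$: a pure teacher term $\sum a_j^*a_l^*\EE[\sigma(Z_j^\top v^*)\sigma(Z_l^\top v^*)]$, a cross term $-2\sum a_j^* a_l\EE[\sigma(Z_j^\top v^*)\sigma(Z_l^\top u)]$, and a pure student term $\sum a_j a_l\EE[\sigma(Z_j^\top u)\sigma(Z_l^\top u)]$. By independence of the $Z_j$'s, every off-diagonal entry ($j\neq l$) factorizes and contributes $1/(2\pi)$; the diagonal entries use the arc-cosine kernel with angle $0$ for the pure terms (giving $g(0)/(2\pi)=1/2$) and with angle $\phi$ for the cross term (giving $g(\phi)/(2\pi)$). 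Summing and using $\sum_j a_j^* a_l^* \mathds{1}_{\{j\neq l\}}=(\mathds{1}^\top a^*)^2-\|a^*\|_2^2$ (and similarly for the others) rearranges into exactly the displayed formula for $\cL(w,a)$.

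The gradient in $a$ is then immediate since $\cL$ is quadratic in $a$. Differentiating the four $a$-dependent terms in turn yields $\tfrac{\pi-1}{2\pi}a$, $-\tfrac{1}{2\pi}(g(\phi)-1)a^*$, $\tfrac{1}{2\pi}\mathds{1}\mathds{1}^\top a$, and $-\tfrac{1}{2\pi}\mathds{1}\mathds{1}^\top a^*$; combining gives the stated closed form. For the gradient in $w$, I would apply the chain rule through $w\mapsto s=\mathds{1}/\sqrt{p}+w\mapsto u=s/\|s\|_2\mapsto\phi=\arccos(u^\top v^*)$. The Jacobian of $s\mapsto s/\|s\|_2$ at $\|s\|_2=1$ is the projection $I-uu^\top$; the derivative of arccos gives $\nabla_u\phi=-v^*/\sin\phi$; and $g'(\phi)=-(\pi-\phi)\sin\phi$ (the $\cos\phi$ terms cancel). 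Only the term $-\tfrac{1}{2\pi}(g(\phi)-1)(a^\top a^*)$ depends on $w$, so
$$\nabla_w\cL=-\tfrac{a^\top a^*}{2\pi}g'(\phi)(I-uu^\top)\nabla_u\phi=-\tfrac{a^\top a^*(\pi-\phi)}{2\pi}(I-uu^\top)v^*,$$
matching the claim after substituting $u=\mathds{1}/\sqrt{p}+w$.

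The one non-routine step is the arc-cosine kernel identity; everything else is bookkeeping. A subtle point to check explicitly is that the two factors of $\sin\phi$ coming from $g'(\phi)$ and $\nabla_u\phi$ cancel, so the formula for $\nabla_w\cL$ is well-defined and continuous at $\phi=0$ and $\phi=\pi$, where the individual factors are singular. A secondary point is justifying that because $\cL$ depends on $w$ only through the unit vector $u$, the tangential projection $I-uu^\top$ is exactly what appears (there is no radial contribution), which is why the answer does not involve $\|s\|_2$ beyond the normalization constraint.
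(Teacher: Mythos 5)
Your derivation is correct. The paper itself omits the proof, citing it as a simple extension of Theorem 3.1 in \citet{du2017gradient}; your argument — the first-order arc-cosine kernel identity $\EE[\sigma(Z^\top u_1)\sigma(Z^\top u_2)]=g(\phi)/(2\pi)$, independence across $j\neq l$ giving the $1/(2\pi)$ off-diagonal contributions, linearity in $a$ for $\nabla_a\cL$, and the chain rule through the unit normalization (Jacobian $I-uu^\top$ at $\|s\|_2=1$, $\nabla_u\phi=-v^*/\sin\phi$, and the cancellation of $\sin\phi$ against $g'(\phi)=-(\pi-\phi)\sin\phi$) — is exactly the standard derivation the cited result rests on, with the normalization Jacobian being the promised "simple extension" to the ResNet parameterization.
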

This proposition is a simple extension of Theorem 3.1 in \citet{du2017gradient}. Here, we omit the proof.

For notational simplicity, we denote $v_t=\mathds{1}/\sqrt{p}+w_t$ in the future proof.
\section{Proof of Theoretical Results}

\subsection{Proof of Proposition \ref{pf:spurious}}\label{pf:spurious}
\begin{proof}
Recall that \citet{du2017gradient} proves that $(\bar{v},\bar{a})=(-v^*, (\mathds{1}\mathds{1}^\top+(\pi-1)I)^{-1}(\mathds{1}\mathds{1}^\top-I)a^*)$ is the spurious local optimum of the CNN counterpart to our ResNet. Substitute $\bar{v}$ by $\frac{\mathds{1}/\sqrt{p} + w}{\norm{\mathds{1}/\sqrt{p} + w}_2} $ and we prove the result.
\end{proof}

\subsection{Proof of Theorem \ref{stage1_a}}\label{pf_stage1_a}
\subsubsection{Proof of Lemma \ref{lem_sum_a}}
\begin{proof}
By simple manipulication, we know that the initialization of $a$ satisfies  $-2\left(\mathds{1}^\top{a}^*\right)^2\leq\mathds{1}^\top a^*\mathds{1}^\top a_0 - \left(\mathds{1}^\top a^*\right)^2\leq 0.$
	We first prove the right side of the inequality. Expand $a_{t}$ as $a_{t-1} - \eta_a\nabla_a\cL(w_{t-1},a_{t-1}),$ and we have
	\begin{align*}
	\mathds{1}^\top a^*\mathds{1}^\top a_{t}&=\left(1-\frac{\eta_a \left(k+\pi-1\right)}{2\pi}\right)\mathds{1}^\top a^*\mathds{1}^\top a_{t-1}
	+ \frac{\eta_a \left(k+g\left(\phi_{t-1}\right)-1\right)}{2\pi}\left(\mathds{1}^\top a^*\right)^2\\
	&\leq \left(1-\frac{\eta_a \left(k+\pi-1\right)}{2\pi}\right)\mathds{1}^\top a^*\mathds{1}^\top a_{t-1}
	+ \frac{\eta_a \left(k+\pi-1\right)}{2\pi}\left(\mathds{1}^\top a^*\right)^2.
	\end{align*}
	Subtract $\left(\mathds{1}^\top a^*\right)^2$ from both sides, then we get
	\begin{align*}
	\mathds{1}^\top a^*\mathds{1}^\top a_{t}-\left(\mathds{1}^\top a^*\right)^2
	&\leq  \left(1-\frac{\eta_a \left(k+\pi-1\right)}{2\pi}\right)(\mathds{1}^\top a^*\mathds{1}^\top a_{t-1}-\left(\mathds{1}^\top a^*\right)^2)\\
	&\leq   \left(1-\frac{\eta_a \left(k+\pi-1\right)}{2\pi}\right)^t(\mathds{1}^\top a^*\mathds{1}^\top a_{0}-\left(\mathds{1}^\top a^*\right)^2)\leq 0,
	\end{align*}
for any $t\geq 1.$ The right side inequality is proved.

	The proof of the left side follows similar lines. Since $g(\phi)\geq0,$ we have 
	\begin{align*}
	\mathds{1}^\top a^*\mathds{1}^\top a_{t}&=\left(1-\frac{\eta_a \left(k+\pi-1\right)}{2\pi}\right)\mathds{1}^\top a^*\mathds{1}^\top a_{t-1}
	+ \frac{\eta_a \left(k+g\left(\phi_{t-1}\right)-1\right)}{2\pi}\left(\mathds{1}^\top a^*\right)^2\\
	&\geq \left(1-\frac{\eta_a \left(k+\pi-1\right)}{2\pi}\right)\mathds{1}^\top a^*\mathds{1}^\top a_{t-1}
	+ \eta_a \frac{ k-1}{2\pi}\left(\mathds{1}^\top a^*\right)^2,
\end{align*}
which is equivalent to the following inequality.
\begin{align*}
\mathds{1}^\top a^*\mathds{1}^\top a_{t}-\left(\mathds{1}^\top a^*\right)^2
	&\geq  \left(1-\frac{\eta_a \left(k+\pi-1\right)}{2\pi}\right)(\mathds{1}^\top a^*\mathds{1}^\top a_{t-1}-\left(\mathds{1}^\top a^*\right)^2)-\frac{\eta_a }{2}\left(\mathds{1}^\top a^*\right)^2.\\
	&\geq \left(1-\frac{\eta_a \left(k+\pi-1\right)}{2\pi}\right)^t(\mathds{1}^\top a^*\mathds{1}^\top a_{0}-\left(\mathds{1}^\top a^*\right)^2)-\frac{1}{1-\left(1-\frac{\eta_a \left(k+\pi-1\right)}{2\pi}\right)}\frac{\eta_a }{2}\left(\mathds{1}^\top a^*\right)^2\\
	&\geq \left(1-\frac{\eta_a \left(k+\pi-1\right)}{2\pi}\right)^t(\mathds{1}^\top a^*\mathds{1}^\top a_{0}-\left(\mathds{1}^\top a^*\right)^2)-\frac{\pi}{k+\pi-1}\left(\mathds{1}^\top a^*\right)^2\\
	&\geq \left(1-\frac{\eta_a \left(k+\pi-1\right)}{2\pi}\right)^t(-2\left(\mathds{1}^\top a^*\right)^2)-\frac{\pi}{k+\pi-1}\left(\mathds{1}^\top a^*\right)^2\\
	&\geq -3\left(\mathds{1}^\top a^*\right)^2.
	\end{align*}
	Then we prove the lemma.
\end{proof}

\subsubsection{Proof of Lemma \ref{lem_small}}
\begin{proof}
For each iteration, the distance of $w_t$ moving towards $\bar{w}$ is upper bounded by the product of the step size $\eta_w$ and the norm of the gradient $ \nabla_w\cL\left(w,a\right).$ We first bound the norm of the gradient. From the analytic form of $ \nabla_w\cL\left(w,a\right),$  we need to bound $a^\top a^*.$ We first have the following lower bound.
\begin{align*}
a_{t+1}a^*&= \left(1-\frac{\eta_a\left(\pi-1\right)}{2\pi}\right)a_t^\top a^* + \frac{\eta_a\left(g\left(\phi_t\right)-1\right)}{2\pi}\norm{a^*}_2^2
	+\frac{\eta_a}{2\pi}\left(\left(\mathds{1}^\top a^*\right)^2 -\mathds{1}^\top a^*\mathds{1}^\top a_t\right)\\
	&\geq \left(1-\frac{\eta_a\left(\pi-1\right)}{2\pi}\right)a_t^\top a^* -\eta_a \frac{2}{\pi}\norm{a^*}_2^2,
\end{align*}
which is equivalent to
\begin{align*}
a_{t+1}a^*+\frac{4}{\pi-1}\norm{a^*}_2^2
	&\geq \left(1-\frac{\eta_a\left(\pi-1\right)}{2\pi}\right)(a_t^\top a^* + \frac{4}{\pi-1}\norm{a^*}_2^2)\\
	&\geq  \left(1-\frac{\eta_a\left(\pi-1\right)}{2\pi}\right)^{t+1}(a_0^\top a^* + \frac{4}{\pi-1}\norm{a^*}_2^2).
\end{align*}
Since $a_0^\top a^*\geq-\norm{a^*}_2^2,$ we have $a_0^\top a^* + \frac{4}{\pi-1}\norm{a^*}_2^2\geq 0.$ Thus,  when $\eta_a<\frac{2\pi}{\pi-1},$ $$a_{t+1}a^*\geq-\frac{4}{\pi-1}\norm{a^*}_2^2\geq -2\norm{a^*}_2^2.$$
When $a_{t+1}a^*<2\norm{a^*}_2^2,$ the following inequality holds true.
 \begin{align*}
		\norm{\nabla_wL\left(w_t,a_t\right)}_2^2=&\frac{\left(a_t^\top a^*\right)^2 \left(\pi -\phi_t\right)^2}{4\pi^2}v^{*\top}\left(I-v_tv_t^\top\right)v^*\\&\leq \norm{a^*}_2^4(I-v_t^\top v^*)(I+v_t^\top v^*)\leq\norm{a^*}_2^4\norm{v_t-v^*}_2^2.
	\end{align*}
We next prove that when $\eta_w$ is small enough, $\phi_t<\pi/2$ holds for all $t\leq T=O(1/\eta_a^2).$ 
	We first have the following inequality.
	\begin{align*}
		1\leq\norm{\tilde{v}_{t+1}}_2=\sqrt{\norm{v_t}_2^2+\norm{\eta_w\nabla_wL\left(w_t,a_t\right)}_2^2}
		\leq 1+\norm{\eta_w\nabla_wL\left(w_t,a_t\right)}_2.
	\end{align*}
Under Assumption \ref{assump1}, we know that  $\phi_0<\pi/3.$ Then we can bound the norm of the difference between iterates $w_{t+1}$ and $w^*$  as follows.
\begin{align*}
\norm{{v}_{t+1}-v^*}_2&=\norm{\tilde{v}_{t+1}/\norm{\tilde{v}_{t+1}}_2-v^*}_2\leq \frac{1}{\norm{\tilde{v}_{t+1}}_2}\norm{\tilde{v}_{t+1}-v^*}_2+1-\frac{1}{\norm{\tilde{v}_{t+1}}_2}\\
&\leq \norm{\tilde{v}_{t+1}-v^*}_2+1-\frac{1}{1+\norm{\eta_w\nabla_wL\left(w_t,a_t\right)}_2}.
\end{align*}
Plug in the upper bound of the norm of $\nabla_wL\left(w_t,a_t\right),$ and we obtain
\begin{align*}
\norm{{v}_{t+1}-v^*}_2&\leq\norm{\tilde{v}_{t+1}-v^*}_2+1-\frac{1}{1+\eta_w\norm{a^*}_2^2\norm{v_t-v^*}_2}\\
&=\norm{v_t-v^*-\eta_w\nabla_w\cL(a_t,w_t)}_2^2+\frac{\eta_w\norm{a^*}_2^2\norm{v_t-v^*}_2}{1+\eta_w\norm{a^*}_2^2\norm{v_t-v^*}_2}\\
&\leq\norm{v_t-v^*}_2+\eta_w\norm{\nabla_w\cL(a_t,w_t)}_2+\eta_w\norm{a^*}_2^2\norm{v_t-v^*}_2\\
&\leq \norm{v_t-v^*}_2+\eta_w\norm{a^*}_2^2\norm{v_t-v^*}_2+\eta_w\norm{a^*}_2^2\norm{v_t-v^*}_2\\
&=(1+2\eta _w\norm{a^*}_2^2)\norm{v_t-v^*}_2\leq(1+2\eta _w\norm{a^*}_2^2)^t \norm{v_0-v^*}_2\\
&\leq \exp( 2t\eta_w\norm{a^*}_2^2)\norm{v_0-v^*}_2\\
&\leq \exp( 2t\eta_w\norm{a^*}_2^2)\leq 2-2\cos\left(\frac{5}{12}\pi\right),
\end{align*}
for all $t\leq T=O(1/\eta_a^2),$ when $\eta_w=C_1\norm{a^*}_2^2\eta_a^2=\tilde O(\eta_a^2)$ for some constant $C_1>0.$ Thus $\phi_t\leq \frac{5}{12}\pi$ for all $t\leq T=O(1/\eta_a^2).$
\end{proof}

\subsubsection{Proof of Lemma \ref{thm_pd_a}}\label{pf_thm_pd}
\begin{proof}
	For any $C_3\in(0,1),$ if we have $a^\top a^*\leq {C_3}\norm{a^*}_2^2$, the norm of the difference between $a$ and $a^*$  satisfies the following inequality.  $$\norm{a-a^*}_2^2\geq\left(1-{2C_3}\right)\norm{a^*}_2^2.$$ Let $C_2=g(\frac{5}{12}\pi)-1=0.4402.$ SInce $\phi\leq \frac{5}{12}\pi,$ and $g$ is strictly decreasing,  we know that $g(\phi)\geq C_2.$ Using the above two inequalities, we can lower bound the inner product between the negative gradient and the difference between $a$ and $a^*$ as follows.
	\begin{align*}
	\langle-\nabla_aL\left(w+\xi,a+\epsilon\right),a^*-a\rangle
	&= \frac{1}{2\pi}\left(\mathds{1}^\top a- \mathds{1}^\top a^*\right)^2
	+\frac{1}{2\pi}\left(\left(\pi-1\right)a-\left(g\left(\phi\right)-1\right)a^*\right)^\top \left(a-a^*\right)\\
	&=\frac{1}{2\pi}\left(\mathds{1}^\top a- \mathds{1}^\top a^*\right)^2
	+\frac{1}{2\pi}\left(\pi-g\left(\phi\right)\right)a^\top \left(a-a^*\right)+\frac{ g\left(\phi\right)-1}{2\pi}\norm{a-a^*}_2\\
	&\geq -\frac{1}{2\pi}\left(\pi- g\left(\phi\right)\right)a^\top a^*+\frac{ g\left(\phi\right)-1}{2\pi}\norm{a-a^*}_2^2\\
	&\geq -\frac{1}{2\pi}\left(\pi- g\left(\phi\right)\right)a^\top a^*+\frac{g\left(\phi\right)-1}{4\pi}\norm{a-a^*}_2^2+\frac{g\left(\phi\right)-1}{4\pi}\norm{a-a^*}_2^2\\	
	&\geq-\frac{C_3}{2}\norm{a^*}_2^2 +\frac{C_2}{4\pi }\left(1-{2C_3}\right)\norm{a^*}_2^2
	+   \frac{C_2}{4\pi }\norm{a-a^*}_2^2\\
	&\geq  \frac{C_2}{4\pi }\norm{a-a^*}_2^2\geq   \frac{1}{10\pi }\norm{a-a^*}_2^2,
	\end{align*}
	when $C_3\leq\frac{C_2}{2(C_2+\pi)}.$ Take $C_3=\frac{1}{20},$ and we prove the result.
\end{proof}

\subsubsection{Proof of Lemma \ref{lem_escape}}\label{pf_lem_escape}
\begin{proof}
We prove the result by contradiction. Specifically, we show that if $a_t\in\cA$ always holds, there always exist some time $\tau$ such that $a_\tau\notin\cA,$ which is a contradiction. Formally, suppose $\forall\tau\leq t, a_\tau\in \cA,$ then we have
	\begin{align}\label{eq:2}
	\norm{a_{t+1}-a^*}_2^2
	&=\norm{a_t-a^*}_2^2
	-2\langle-\eta_a\mathbb{E}_{\xi,\epsilon}\nabla_aL\left(w_t,a_t\right),a^*-a_t\rangle\\
	&+\norm{\eta_a \nabla_aL\left(w_t,a_t\right)}_2^2.
	\end{align}
The second term is lower bounded according to the partial dissipativity of $\nabla_a \cL$. Thus, we only need to bound the norm of the gradient.
	\begin{align*}
	\norm{\nabla_aL\left(w_t,a_t\right)}_2^2
	&=\norm{\nabla_aL\left(w_t,a_t\right)-\nabla_aL\left(w^*,a^*\right)}_2^2\\
	&=
	\norm{\frac{1}{2\pi}\left(\mathds{1}\mathds{1}^\top+\left(\pi-1\right)I\right)\left(a_t-a^*\right)
		-\frac{g\left(\phi\right)-\pi}{2\pi}a^*}_2^2\\
	&\leq \frac{1}{2\pi^2}\norm{\left(\mathds{1}\mathds{1}^\top+\left(\pi-1\right)I\right)\left(a_t-a^*\right)}_2^2 
	+\frac{1}{2}\norm{a^*}_2^2\\
	&\leq \frac{\left(k+\pi-1\right)^2}{\pi^2}\left(\norm{a_t-a^*}_2^2\right) 
	+\frac{1}{2}\norm{a^*}_2^2.
	\end{align*}
	Plug the above bound into \eqref{eq:2},  then we have
	\begin{align*}
	\norm{a_{t+1}-a^*}_2^2
		&\leq \left(1-\frac{\pi}{5}\eta_a  + \eta_a^2\frac{\left(k+\pi-1\right)^2}{\pi^2}\right)\norm{a_t-a^*}_2^2 +\frac{\eta_a ^2}{2}\norm{a^*}_2^2\\
	&\leq\left(1-\lambda_1\right)\norm{a_t-a^*}_2^2+ b_1\\
	&\leq(1-\lambda_1)^{t+1}\norm{a_0-a^*}_2^2+\frac{b_1}{\lambda_1},
	\end{align*}
	where  $\lambda_1=\frac{\pi}{5}\eta_a - \eta_a^2\frac{\left(k+\pi-1\right)^2}{\pi^2}$ and $b_1=\frac{\eta_a ^2}{2}\norm{a^*}_2^2.$ When $\eta_a <\frac{\pi}{20(k+\pi-1)^2},$ we have $\frac{b_1}{\lambda_1}\leq \frac{\norm{a^*}_2^2}{6}.$ Thus,  after $\tau_{11}=O(\frac{1}{\eta_a })$ iterations, we have 
	\begin{align*}
	\norm{a_{\tau_{11}}-a^*}_2^2&< \frac{\norm{a^*}_2^2}{4}.
	\end{align*}
	On the other hand, $a_{\tau_{11}}\in \cA$ implies that $\norm{a_{\tau_{11}}-a^*}_2^2\geq \frac{1}{4}\norm{a^*}_2^2$. Thus,  after $\tau_{11}=O(\frac{1}{\eta_a})$ iterations, we have
	 $$\frac{1}{20}\norm{a^*}_2^2\leq a_t^\top a^*~\text{and } \norm{a_{t}-a^*/2}_2^2\leq\norm{a^*}_2^2.$$ 	Moreover, $\norm{a_{t}-a^*/2}_2^2\leq\norm{a^*}_2^2$ implies $a_t^\top a^*\leq2\norm{a^*}_2^2,$ and we prove the lemma.	
\end{proof}
\subsubsection{Proof of Lemma \ref{lem_inner_a}}
\begin{proof} 
	We first prove the left side. Write $a_{t+1} = a_t - \eta_a\nabla_a\cL(w,a)$ and we have
	\begin{align*}
	a_{t+1}^\top a^*
	&= \left(1-\frac{\eta_a \left(\pi-1\right)}{2\pi}\right)a_t^\top a^* + \frac{\eta_a \left(g\left(\phi_t\right)-1\right)}{2\pi}\norm{a^*}_2^2
	+\frac{\eta_a }{2\pi}\left(\left(\mathds{1}^\top a^*\right)^2 -\mathds{1}^\top a^*\mathds{1}^\top a_t\right)\\
	&\geq \left(1-\frac{\eta_a \left(\pi-1\right)}{2\pi}\right)a_t^\top a^* + 
	\eta_a  \frac{C_2}{2\pi}\norm{a^*}_2^2.
\end{align*}
The last inequality holds since $g(\phi)\geq 1$ and $\left(\mathds{1}^\top a^*\right)^2 -\mathds{1}^\top a^*\mathds{1}^\top a_t\geq0.$
Subtract $\frac{C_2}{\pi-1}\norm{a^*}_2^2$ from both sides and we have  the following inequality
\begin{align*}
a_{t+1}^\top a^*-\frac{C_2}{\pi-1}\norm{a^*}_2^2&\geq \left(1-\frac{\eta_a \left(\pi-1\right)}{2\pi}\right)\left(a_t^\top a^*-\frac{C_2}{\pi-1}\norm{a^*}_2^2\right)\\
	&\geq\left(1-\frac{\eta_a \left(\pi-1\right)}{2\pi}\right)^t\left(a_0^\top a^*-\frac{C_2}{\pi-1}\norm{a^*}_2^2\right).
	\end{align*}
	Thus, when $t\geq\tau_{12}=\tilde O(1/\eta_a )>0,$ we have 
	$a_{t}^\top a^*\geq \frac{1}{5}\norm{a^*}_2^2.$ 
	
	For the right side, follows similar lines to the left side, we have
	\begin{align*}
	a_{t+1}^\top a^*
	&= \left(1-\frac{\eta_a \left(\pi-1\right)}{2\pi}\right)a_t^\top a^* + \frac{\eta_a \left(g\left(\phi_t\right)-1\right)}{2\pi}\norm{a^*}_2^2
	+\frac{\eta_a }{2\pi}\left(\left(\mathds{1}^\top a^*\right)^2 -\mathds{1}^\top a^*\mathds{1}^\top a_t\right)\\
	&\leq \left(1-\frac{\eta_a \left(\pi-1\right)}{2\pi}\right)a_t^\top a^* + \eta_a  \frac{\pi-1}{2\pi}\norm{a^*}_2^2+\eta_a  \frac{3}{2\pi}(\mathds{1}^\top a^*)^2\\
	&\leq \left(1-\frac{\eta_a \left(\pi-1\right)}{2\pi}\right)^{t+1}a_0^\top a^*+\norm{a^*}_2^2+\frac{3}{\pi-1}(\mathds{1}^\top a^*)^2.
\end{align*}
Note that $a_0^\top a^*\leq 2\norm{a^*}_2^2.$
Thus, for all $t$,  $a_{t+1}^\top a^*\leq3\norm{a^*}_2^2+2(\mathds{1}^\top a^*)^2.$ 
\end{proof}
\subsection{proof of Theorem \ref{large_etaw}}\label{pf_large_etaw}
\subsubsection{Proof of Lemma \ref{lem_pd_w}}
\begin{proof}
Note that $\norm{v_t}_2=\norm{v^*}_2=1,$ according to Proposition \ref{prop:form}, the gradient with respect to $w$ can be rewritten as 
	\begin{align*}
	\nabla_w\cL\left(w_t,a_t\right)&=-\frac{a_t^\top a_t^* \left(\pi -\phi_t\right)}{2\pi}\left(I-v_tv_t^\top\right)v^*.
	\end{align*}
	Then we have the following inequality.
	\begin{align*}
	\langle-\nabla_w\cL(w_t,a_t),w^*-w_t\rangle&=	\langle-\nabla_w\cL(w_t,a_t),v^*-v_t\rangle\\
	&=\frac{a_t^\top a_t^* \left(\pi -\phi_t\right)}{2\pi}\left(1- (v_t^\top v^*)^2\right)\\
	&\geq\frac{m}{4}(1-v_t^\top v^*)\\
	&=\frac{m}{8}\norm{w-{w}^*}_2^2.
	\end{align*}
\end{proof}
\subsubsection{proof of Theorem \ref{large_etaw}}\label{pf_large_etaw}
\begin{proof}
First,  we  bound the norm of the gradient as follows
	\begin{align*}
		\norm{\nabla_wL\left(w,a\right)}_2^2=&\frac{\left(a^\top a^*\right)^2 \left(\pi -\phi\right)^2}{4\pi^2}v^{*\top}\left(I-vv^\top\right)v^*\leq \frac{M^2}{4}(I-v^\top v^*)(I+v^\top v^*)\leq \frac{M^2}{4}\norm{v-v^*}_2^2.
	\end{align*}
Next we show that 
	$\norm{v_{t+1}-v^*}_2^2 \leq \norm{\tilde{v}_{t+1}-v^*}_2^2$. We first have the following two inequalities.
	\begin{align*}
		\norm{\tilde{v}_{t+1}}_2^2=\norm{v_t}_2^2+\norm{\eta_w\nabla_wL\left(w_t,a_t\right)}_2^2
		\geq 1.
	\end{align*}
	\begin{align*}
	\tilde{v}_{t+1}^\top v^*=v_t^\top v^*+\eta_w\langle-\nabla_w\cL(w_t+\xi,a_t+\epsilon),{v}^*-v_t\rangle \geq v_t^\top v^*>0.
	\end{align*}

Thus,  $0<v_{t+1}^\top v^*\leq 1.$ We  then have 
	\begin{align*}
		\norm{\tilde{v}_{t+1}-v^*}_2^2
		&= 1 + \norm{\tilde{v}_{t+1}}_2^2-2\norm{\tilde{v}_{t+1}}_2v_{t+1}^\top v^*\\
		&\geq 1 + 1-2w_{t+1}^\top w^*=\norm{v_{t+1}-v^*}_2^2.
	\end{align*}

Then the distance between $\tilde{w}_{t+1}$ and $w^*$ is as follows.
\begin{align*}
\norm{v_{t+1}-v^*}_2^2\leq\norm{\tilde{v}_{t+1}-v^*}_2^2&=\norm{w_t-\eta_w\nabla_w\cL(a_t,w_t)-w^*}_2^2\\
&=\norm{v_t-v^*}_2^2+\norm{\eta_w\nabla_w\cL(a_t,w_t)}_2^2-2\langle-\nabla_w\cL(w_t+\xi,a_t+\epsilon),{v}^*-v_t\rangle\\
& \leq (1-\eta_w\frac{m}{4}+\eta_w^2  \frac{M^2}{4})\norm{v_t-{v}^*}_2^2\leq\norm{v_t-{v}^*}_2^2,
\end{align*}	
when $\eta_w\leq \frac{m}{M^2}.$ Thus, $\phi_t\leq \phi_0\leq \frac{5}{12}\pi.$ We prove the first part.

We then prove the second part. Using the same expansion as in Lemma \ref{lem_inner_a}, we get
\begin{align*}
	a_{t+1}^\top a^*
	&= \left(1-\frac{\eta_a\left(\pi-1\right)}{2\pi}\right)a_t^\top a^* + \frac{\eta_a\left(g\left(\phi_t\right)-1\right)}{2\pi}\norm{a^*}_2^2
	+\frac{\eta_a}{2\pi}\left(\left(\mathds{1}^\top a^*\right)^2 -\mathds{1}^\top a^*\mathds{1}^\top a_t\right)\\
	&\geq \left(1-\frac{\eta_a\left(\pi-1\right)}{2\pi}\right)a_t^\top a^* + 
	\eta_a \frac{C_2}{2\pi}\norm{a^*}_2^2.
\end{align*}
Choose $\eta_a<\frac{2\pi}{\pi-1},$ such that $1-\frac{\eta_a\left(\pi-1\right)}{2\pi}<1.$ If $m\leq a_t^\top a^*\leq \frac{C_2}{\pi-1}\norm{a^*}_2^2,$ the following inequality shows that $a_{t}^\top a^*$ increases over time.
\begin{align*}
	a_{t+1}^\top a^*
	&\geq \left(1-\frac{\eta_a\left(\pi-1\right)}{2\pi}\right)a_t^\top a^* + 
	\eta_a \frac{C_2}{2\pi}\norm{a^*}_2^2\geq a_t^\top a^*\geq m.
\end{align*}
If $a_t^\top a^*\geq \frac{C_2}{\pi-1}\norm{a^*}_2^2,$ we show that this inequality holds for all $t.$
\begin{align*}
	a_{t+1}^\top a^*
	&\geq \left(1-\frac{\eta_a \left(\pi-1\right)}{2\pi}\right)a_t^\top a^* + 
	\eta_a \frac{C_2}{2\pi}\norm{a^*}_2^2,\\
	& \geq \left(1-\frac{\eta_a\left(\pi-1\right)}{2\pi}\right) \frac{C_2}{\pi-1}\norm{a^*}_2^2+ 
	\eta_a \frac{C_2}{2\pi}\norm{a^*}_2^2= \frac{C_2}{\pi-1}\norm{a^*}_2^2
\end{align*}
Combine these two cases together,  we have $a_{t+1}^\top a^*\geq \min\{m, \frac{C_2}{\pi-1}\norm{a^*}_2^2\}=m.$
The other side follows similar lines in Lemma \ref{lem_inner_a}. Here, we omit the proof.
\end{proof}

\subsection{Proof of Theorem \ref{converge}}\label{pf_converge}
\subsubsection{Proof of Lemma \ref{new_pda}}
\begin{proof}
Note that $$\norm{{w}_{t}-w^*}_2^2\leq \delta~\iff~\cos(\phi_t)\geq1-\frac{\delta}{2}.$$
	Moreover, we can  bound $g(\phi_t)$ as follows
	\begin{align*}
	\pi\geq g(\phi_t)&=(\pi-\phi_t)\cos\phi_t+\sin\phi_t\geq \left(1-\frac{\delta}{2}\right)\pi=\pi- \frac{\delta}{2}\pi.
	\end{align*}
Thus we have the partial dissipativity of $\nabla_a\cL.$
\begin{align*}
	\langle-\nabla_aL\left(w,a\right),a^*-a\rangle
	&= \frac{1}{2\pi}\left(\mathds{1}^\top a- \mathds{1}^\top a^*\right)^2+\frac{1}{2\pi}\left(\left(\pi-1\right)a-\left( g\left(\phi\right)-1\right)a^*\right)^\top \left(a-a^*\right)\\
	&=\frac{1}{2\pi}\left(\mathds{1}^\top a- \mathds{1}^\top a^*\right)^2
	+\frac{1}{2\pi}\left(\pi- g\left(\phi\right)\right)a^{*\top} \left(a-a^*\right)+\frac{\pi-1}{2\pi}\norm{a-a^*}_2^2\\
	&\geq \frac{\pi-1}{2\pi}\norm{a-a^*}_2^2 -\delta/5.
	\end{align*}
	\end{proof}
\subsubsection{Proof of Lemma \ref{converge_w}}
\begin{proof}
First,   we bound  the norm of the gradient as follows
	\begin{align*}
		\norm{\nabla_wL\left(w,a\right)}_2^2=&\frac{\left(a^\top a^*\right)^2 \left(\pi -\phi\right)^2}{4\pi^2}v^{*\top}\left(I-vv^\top\right)v^*\leq \frac{M^2}{4}(I-v^\top v^*)(I+v^\top v^*)\leq \frac{M^2}{4}\norm{v-v^*}_2^2
	\end{align*}
	We  next show that 
	$\norm{w_{t+1}-w^*}_2^2 \leq \norm{\tilde{w}_{t+1}-w^*}_2^2$. We first have the following inequality.
	\begin{align*}
		\norm{\tilde{w}_{t+1}}_2^2=\norm{w_t}_2^2+\norm{\eta\nabla_wL\left(w_t,a_t\right)}_2^2
		\geq 1.
	\end{align*}
Since we have $w_{t+1}^\top w^*\leq 1$, we show that $\norm{\tilde{v}_{t+1}-v^*}_2^2\leq \norm{v_{t+1}-v^*}_2^2.$
	\begin{align*}
		\norm{\tilde{v}_{t+1}-v^*}_2^2
		&= 1 + \norm{\tilde{w}_{t+1}}_2^2-2\norm{\tilde{w}_{t+1}}_2w_{t+1}^\top w^*\\
		&\geq 1 + 1-2w_{t+1}^\top w^*=\norm{v_{t+1}-v^*}_2^2.
	\end{align*}

Then the distance between $\tilde{w}_{t+1}$ and $w^*$ is as follows.
\begin{align*}
\norm{v_{t+1}-v^*}_2^2\leq\norm{\tilde{v}_{t+1}-v^*}_2^2&=\norm{w_t-\eta\nabla_w\cL(a_t,w_t)-w^*}_2^2\\
&=\norm{w_t-w^*}_2^2+\norm{\eta\nabla_w\cL(a_t,w_t)}_2^2-2\langle-\nabla_w\cL(w+\xi,a+\epsilon),{w}^*-w\rangle\\
& \leq (1-\eta\frac{m}{4}+\eta^2  \frac{M^2}{4})\norm{v_t-{v}^*}_2^2.
\end{align*}	
So we have for any $t,$
\begin{align*}
\norm{{v}_{t}-v^*}_2^2& \leq (1-\eta\frac{m}{4}+\eta^2  \frac{M^2}{4})^t\norm{v_0-{v}^*}_2^2.
\end{align*}
Thus, choose $\eta\leq\frac{m}{2M^2}=\tilde O(\frac{1}{k^2}),$ and after $t\geq\tau_{21}=\frac{4}{m\eta}\log\frac{4}{\delta}$ iterations, we have  
$$\norm{{v}_{t}-v^*}_2^2\leq \delta,$$ which is equivalent to $$\norm{{w}_{t}-w^*}_2^2\leq \delta.$$
\end{proof}

\subsubsection{Proof of Lemma \ref{converge_a}}
\begin{proof}
The proof follows similar lines to that of Lemma \ref{converge_w}. By the partial dissipativity  of $\cL_a,$ we have
\begin{align*}
\norm{a_{t+1}-a^*}_2^2
&=\norm{a_t-a^*}_2^2
-2\langle-\eta\mathbb{E}_{\xi,\epsilon}\nabla_aL\left(w_t,a_t\right),a^*-a_t\rangle\\
&+\norm{\eta \nabla_aL\left(w_t,a_t\right)}_2^2\\
&\leq \left(1-\eta \frac{\pi-1}{\pi} + \eta^2\frac{\left(k+\pi-1\right)^2}{\pi^2}\right)\norm{a_t-a^*}_2^2 +2{\eta^2\delta^2/25}+2\eta \delta/5\\
&\leq\left(1-\lambda_2\right)\norm{a_t-a^*}_2^2+ \frac{4}{5}\eta \delta\\
&\leq(1-\lambda_2)^{t+1}\norm{a_0-a^*}_2^2+\frac{b_2}{\lambda_2}.
\end{align*}
where $\lambda_2=\eta \frac{\pi-1}{\pi} -\eta^2\frac{\left(k+\pi-1\right)^2}{\pi^2}$ and $b_2=\frac{4}{5}\eta\delta.$ Take $\eta\leq \frac{5\pi^2}{4\left(k+\pi-1\right)^2},$ and then $\lambda_2\geq \frac{\eta}{4}.$ When $t\geq \tau_{22}=\frac{4}{\eta}\log\frac{\norm{a_0-a^*}_2^2}{\delta}=\tilde O(\frac{1}{\eta}\log\frac{1}{\delta}),$ we have 
$$\norm{a_{t}-a^*}_2^2\leq 5\delta.$$
\end{proof}

\section{Experimental Settings}\label{expsetting}
The output weight $a^*$ in the teacher network is chosen as in Table \ref{table:a}.
\begin{table*}[!htb]

\centering
\begin{tabular}{c | c}
\hline
& \\[-1em]
$k$ & $(a^*)^\top$ \\\hline
& \\[-1em]
$16$ & $[\underbrace{1, \dots, 1}_{9}, \underbrace{-1, \dots, -1}_{7}]$ \\\hline
& \\[-1em]
$25$ & $[\underbrace{1, \dots, 1}_{14}, \underbrace{-1, \dots, -1}_{11}]$ \\\hline
& \\[-1em]
$36$ & $[\underbrace{1, \dots, 1}_{19}, \underbrace{-1, \dots, -1}_{16}, 0]$  \\\hline
& \\[-1em]
$49$ & $[\underbrace{1, \dots, 1}_{26}, \underbrace{-1, \dots, -1}_{22}, 0]$ \\\hline
& \\[-1em]
$64$ & $[\underbrace{1, \dots, 1}_{34}, \underbrace{-1, \dots, -1}_{30}]$ \\\hline
& \\[-1em]
$81$ & $[\underbrace{1, \dots, 1}_{43}, \underbrace{-1, \dots, -1}_{38}]$ \\\hline
& \\[-1em]
$100$ & $[\underbrace{1, \dots, 1}_{52}, \underbrace{-1, \dots, -1}_{47}, 0]$  \\\hline
\end{tabular}
\caption{Output weight $a^*.$}
\label{table:a}
\end{table*}

The trajectories in Figure \ref{fig:GD} are obtained with $a$ initialized at
\begin{align*}
a_0 = & [-0.1268, -0.1590, -0.1071, -0.1594, -0.4670, 0.1563, 0.1894, -0.2390, -0.0602, \\
& -0.5047, 0.0325, -0.0886, 0.1514, -0.0883, -0.0243, 0.1198, -0.2805, 0.0024, \\
& -0.0855, 0.0742, -0.0976, -0.1768, 0.1207, 0.0049, 0.1809].
\end{align*}

\end{document}